\documentclass[sigconf, nonacm]{acmart}

\AtBeginDocument{%
  }

\usepackage{pgfplots}
\pgfplotsset{compat=1.18}

\usepackage{tikz}
\usetikzlibrary{positioning, arrows.meta, fit, shapes.geometric, calc}

\newcommand{\stagelead}[1]{\smallskip\noindent\textbf{#1}}

\setcopyright{none}
\begin{document}

\title{FORTE: A Production Framework for Feature Evaluation and Deprecation
  in Industrial Recommender Systems}

\author{Vishnu Arun}
\affiliation{%
  \institution{Pinterest}
  \city{San Francisco}
  \state{CA}
  \country{USA}
}
\email{vishnuarun@pinterest.com}

\author{Adam Obeng}
\affiliation{%
  \institution{Pinterest}
  \city{San Francisco}
  \state{CA}
  \country{USA}
}
\email{aobeng@pinterest.com}

\author{Tianyou Wang*}
\affiliation{%
  \institution{Pinterest}
  \city{San Francisco}
  \state{CA}
  \country{USA}
}
\email{tianyouwang@pinterest.com}

\author{Zilin Ren}
\affiliation{%
  \institution{Pinterest}
  \city{San Francisco}
  \state{CA}
  \country{USA}
}
\email{zren@pinterest.com}

\author{Mukuntha Narayanan S}
\affiliation{%
  \institution{Pinterest}
  \city{San Francisco}
  \state{CA}
  \country{USA}
}
\email{mukuntha@pinterest.com}

\author{Han Wang}
\affiliation{%
  \institution{Pinterest}
  \city{San Francisco}
  \state{CA}
  \country{USA}
}
\email{hanwang@pinterest.com}

\author{Shivin Thukral}
\affiliation{%
  \institution{Pinterest}
  \city{San Francisco}
  \state{CA}
  \country{USA}
}
\email{sthukral@pinterest.com}

\author{Zhixiang Wang}
\affiliation{%
  \institution{Pinterest}
  \city{San Francisco}
  \state{CA}
  \country{USA}
}
\email{zhixiangwang@pinterest.com}

\author{Saurabh Vishwas Joshi}
\authornote{Work done while at Pinterest}
\affiliation{%
  \institution{Pinterest}
  \city{San Francisco}
  \state{CA}
  \country{USA}
}
\email{saurabhsvj@gmail.com}

\author{Darren Reger}
\affiliation{%
  \institution{Pinterest}
  \city{San Francisco}
  \state{CA}
  \country{USA}
}
\email{dreger@pinterest.com}

\renewcommand{\shortauthors}{Arun et al.}

\begin{abstract}
Production recommender systems have feature-lifecycle asymmetry: adding new
features need only produce gains on some metric, while feature removal must not
regress any metric. In practice, feature deprecation is hard to prioritize as a
feature can easily fail to be removed: if it interacts with other features, its
importance changes daily, it cannot be evaluated before production models change
or data drift occurs, it is in contention for A/B test space with other
launches, it causes regressions in an online A/B test, or it is consumed by
multiple models that need to coordinate its deprecation.

We present FORTE, a model-agnostic production framework for discovering,
validating, and removing features. Its efficient perturbation implementation
allows daily feature evaluation. To account for interactions, it evaluates
candidate feature sets constructed from features with the weakest measured
importance through
grouped perturbation, providing an offline gate before online testing.
Lineage tracing identifies all downstream consumer models to realize the full
savings of removing a feature from the training set.

We validate FORTE over 2 large-scale production recommender systems and find that it
identifies deprecable feature sets which do not cause online regressions, while
the impact on efficiency depends on model characteristics. On a search-relevance
model, FORTE resulted in 3 nested candidate sets of 32, 48, and 59 features.
During offline evaluation and retraining, all 3 candidates appeared neutral, but
online guardrails rejected the largest treatment. The 48-feature set was
selected for deprecation, removing 43.2\% of the model's features. On a large
ranking model, FORTE identified 8 candidates for removal. Retraining and A/B
testing confirmed no regression. Its launch reduced the trainable parameters by
29.9\%, improved serving throughput by 19.8--25.5\%, and reduced annual training
costs by hundreds of thousands of dollars.
\end{abstract}

\ccsdesc[500]{Information systems~Recommender systems}
\ccsdesc[300]{Computing methodologies~Machine learning}

\keywords{Feature Deprecation, Feature Importance, Recommender Systems,
  Production Machine Learning, Distributed Inference}

\maketitle

\section{Introduction}

\subsection{Feature-lifecycle asymmetry}

Modern recommender systems drive engagement and revenue across billions of
daily interactions. Pinterest deploys hundreds of models for retrieval,
ranking, and engagement prediction, each ingesting tens to thousands of
features.

Model evolution requires experimenting with dozens of features at any given time. Adding a new feature is generally justified by a gain in a particular user behavioral metric, assuming
adherence to a strict set of model performance guardrails. Removing one is instead justified
by an efficiency gain, and its criteria are harder to satisfy: beyond the same
guardrails, deprecation generally requires validating that no user metric regresses.

Validating a feature removal requires both retraining a model and running an online A/B test of the treatment against the current production model. A/B tests are a limited resource and generally prioritize high-impact treatments, which feature removals do not promise. Deprecation
also creates a combinatorial allocation problem: a model with $d$ features has
$2^d-1$ nonempty candidate removal sets. Although only a small fraction will be
plausible, this candidate space is substantially larger than the set of new
feature launches competing for the same A/B capacity. This incentive structure
tends to lead to the persistence of ineffective features.

\subsection{Deprecation in practice}
\label{subsec:challenges}

In industry, model development typically follows the same iteration cycle for both adding and removing
a feature: propose the change, retrain a model with it enabled, backtest offline
on evaluation data, run an online A/B test, validate online user behavioral metrics and model performance
guardrails against the baseline, then audit and ship. As a change progresses
through the cycle, the associated costs increase as resources constrain.

Deprecation only realizes its benefits after the end of this cycle: removing a feature
reduces trainable parameters, producing a leaner model with lower training and
serving cost, and once no model references the feature it can stop being
collected in training and evaluation sets altogether.

We define deprecation specifically as removing a feature from a model which is already deployed. This
differs from the related concept of feature selection, which is typically
performed during model development to identify and remove low-signal features to
improve model generalization performance.

We identify the following failure modes for feature deprecation in an industry
setting. Each can prevent the feature-deprecation cycle from completing, cause
offline validation results to diverge from online outcomes, or both:

\begin{itemize}
  \item Feature interactions.
  \item Data drift.
  \item Model versioning and drift.
  \item Online A/B bandwidth contention.
  \item Online regression.
  \item Multi-model coordination during deprecation.
\end{itemize}

\subsection{FORTE overview}

We identify the predominant bottleneck to deprecation as limited online A/B bandwidth.
FORTE introduces a perturbation-based feature importance system
which is sufficiently efficient to run perturbation experiments on a daily
cadence, generates and ablates candidate groups, and traces feature lineage to
provide end-to-end feature removal.

\subsection{Contributions}

\begin{itemize}
  \item \textbf{Problem formulation.} We define complete feature deprecation in
    two parts: (1) the ability to identify unimportant feature sets from a
    model, and (2) tracking a feature's complete lineage and deprecating it once
    it is deemed unimportant.
  \item \textbf{FORTE, an end-to-end framework spanning discovery, validation,
    and removal.} We combine temporal feature importance, pooled perturbation,
    online A/B validation, and feature lineage tracking into a single workflow.
  \item \textbf{Efficient perturbation implementation.} We define an efficient
    perturbation implementation with a minimal marginal cost per additional
    perturbation experiment. This enables running individual-feature perturbation experiments
    daily.
  \item \textbf{Case studies from two production recommender systems.} We walk
    through a Search Relevance model and a Pin Ranker model.
\end{itemize}

\section{Related Work}

Permutation importance is a model-agnostic way to measure model reliance.
Breiman \cite{breiman2001} introduced it for Random Forests, and Fisher, Rudin,
and Dominici \cite{fisher2019} generalized it to Model Class Reliance. These
estimators match our operational question: how much does model quality change
when a feature's signal is destroyed? Existing implementations evaluate features
sequentially and lack native support for heterogeneous tensor types found in
production ML models.

The main statistical limitation of marginal permutation is feature dependence.
Unconditional permutation can create out-of-distribution inputs and distort
measured importance for correlated predictors \cite{hooker2021, nicodemus2010}.
Conditional PFI \cite{strobl2008} addresses this by conditioning on the other
predictors when permuting, but requires estimating conditional structure.
Covert, Lundberg, and Lee \cite{covert2021removing} distinguish fixed-value
removal from distributional marginalization; the latter may use marginal or
conditional replacements. FORTE's ablation uses fixed nulls, while permutation
uses batch-shuffled values as empirical marginal replacements. We do not claim
conditional importance, instead using online A/B validation for feature removal.

Stronger attribution alternatives are prohibitive at daily cadence. LOCO
\cite{lei2018} retrains without each feature, requiring $p$ retrainings. ROAR
\cite{hooker2019roar} evaluates rankings by cumulative removal and retraining;
its findings motivate external validation. SHAP \cite{lundberg2017}, KernelSHAP
\cite{covert2021kernelshap}, and SAGE \cite{covert2020} use Shapley-style
coalitions, but sampling cost is prohibitive at thousands of features. TreeSHAP
\cite{lundberg2018tree} cuts this cost for tree ensembles but does not apply to
the neural rankers we study.

Backpropagation-based attribution methods such as Integrated Gradients
\cite{sundararajan2017} and DeepLIFT \cite{shrikumar2017} require access to model internals and a reference input; baseline choice can materially affect the
resulting attribution \cite{sturmfels2020baselines}. FORTE targets global,
perturbation-based importance.

Streaming and online importance estimation \cite{fumagalli2022,
muschalik2023} update with each sample; FORTE uses scheduled model evaluations
for operational monitoring.

Unmanaged feature dependencies contribute to ML technical debt
\cite{sculley2015}. Before FORTE, feature importance at Pinterest required one
inference job per feature, making daily evaluation infeasible. FORTE combines
pooled inference, type-aware perturbation, temporal tracking, and online
validation in one workflow.

\section{Problem Formulation}
\label{sec:problem}

\stagelead{Terminology and sign convention.} In an offline perturbation
experiment, the \emph{baseline condition} is the frozen model evaluated on the
intact examples, and the \emph{perturbed condition} is the same model evaluated
after a feature group $G$ has been permuted or ablated. Let $M_0$ and $M_G$
denote the resulting baseline and perturbed metrics. FORTE reports the
direction-adjusted normalized change
\begin{equation}
  \label{eq:normalized-change}
  \Delta_M(G)
  = 100\,s_M \cdot \frac{M_G-M_0}{M_0}.
\end{equation}
Here $s_M=+1$ when higher values of $M$ are better and $s_M=-1$ when lower
values are better.
We refer to this signed normalized change as the \emph{feature-importance
score}. Thus, a negative score means that perturbing $G$ degrades model quality
and the model relies on the feature group; a score near zero indicates weak
measured reliance; and a positive score means that perturbation improves model
quality. Under this convention, scores increase as measured reliance decreases,
so deprecation candidates are the features with scores near zero or above.
We report a signed change rather than a metric drop so that the score carries
the same orientation as the online A/B results it is compared against.
Classical permutation importance \cite{breiman2001, fisher2019} reports the
negated quantity, so reliance appears there as a positive degradation; readers
should keep this difference in mind when comparing magnitudes across the two
conventions.

In the retraining and online stages, the \emph{baseline} (or online
\emph{control}) is the incumbent production model, while the \emph{treatment}
is a model retrained with a candidate set removed. To avoid overloading these
terms we separate offline from online work throughout. A
\emph{perturbation experiment} is one configured offline comparison of the
baseline and perturbed conditions on a frozen model, and we report offline
results per experiment. A \emph{treatment arm} is a retrained candidate model
allocated online traffic in an \emph{A/B test}, and we report online results per
arm; arms are the scarce resource the funnel of
Section~\ref{sec:system} is ordered around. A \emph{candidate set} is a
collection of features proposed for joint removal.

Given a production model $f$ with feature set $\{X_1, \ldots, X_d\}$ and
evaluation metric $M$, we aim to:

\begin{enumerate}
  \item Compute daily feature importance scores $\mathrm{FI}_j(t)$ for all
    features $j$ at each day $t$, producing temporal series
    $\mathrm{TFI}_j = \{\mathrm{FI}_j(1), \ldots, \mathrm{FI}_j(T)\}$.
  \item Combine temporal importance into a ranked
    deprecation candidate set; validate via retraining and online A/B
    testing.
  \item Identify feature lineage and create a method for safe deprecation.
\end{enumerate}

\begin{figure}[t]
  \centering
  \begin{tikzpicture}[x=1cm, y=1cm, font=\small]

    \definecolor{fInf}{HTML}{FDEAD9}
    \definecolor{fTrain}{HTML}{F0904A}
    \definecolor{fOnline}{HTML}{B8380A}
    \definecolor{fCoord}{HTML}{5B87B2}

    \foreach \xa/\xb/\ha/\hb/\col in {
        0/2.50/0.98/0.82/fInf,
        2.50/3.90/0.82/0.70/fInf,
        3.90/5.30/0.70/0.56/fTrain,
        5.30/6.70/0.56/0.43/fOnline,
        6.70/8.10/0.43/0.33/fCoord} {
      \fill[\col, draw=black!35, line width=0.4pt]
        (\xa,\ha) -- (\xb,\hb) -- (\xb,-\hb) -- (\xa,-\ha) -- cycle;
    }

    \node[align=center, text width=2.35cm] at (1.25,0)
      {\scriptsize Temporal importance\\[-1pt]
       \scriptsize \& candidate construction};
    \node[align=center, text width=1.25cm] at (3.20,0)
      {\scriptsize Grouped\\[-1pt]\scriptsize perturbation};
    \node[align=center, text width=1.25cm] at (4.60,0)
      {\scriptsize Retraining};
    \node[align=center, text width=1.25cm, white] at (6.00,0)
      {\scriptsize Online A/B};
    \node[align=center, text width=1.25cm, white] at (7.40,0)
      {\scriptsize Lineage \&\\[-1pt]\scriptsize deprecation};

    \node[below, font=\scriptsize, black!60, align=center] at (1.25,-1.02)
      {hours};
    \node[below, font=\scriptsize, black!60, align=center] at (3.20,-1.02)
      {hours };
    \node[below, font=\scriptsize, black!60, align=center] at (4.60,-1.02)
      {days};
    \node[below, font=\scriptsize, black!60, align=center] at (6.00,-1.02)
      {weeks};
    \node[below, font=\scriptsize, black!60, align=center] at (7.40,-1.02)
      {weeks};

    \node[right, font=\scriptsize, black!60] at (-0.05,-1.78)
      {Bottleneck:};
    \foreach \x/\c/\l in {1.55/fInf/{inference},
                            3.30/fTrain/{training},
                            4.85/fOnline/{traffic},
                            6.20/fCoord/{coordination}} {
      \fill[\c, draw=black!35, line width=0.3pt]
        (\x,-1.78) rectangle ++(0.18,0.15);
      \node[right, font=\scriptsize, black!60] at ({\x+0.22},-1.70) {\l};
    }

  \end{tikzpicture}

  \caption{FORTE's deprecation funnel. Candidate sets narrow through validation
  and deprecation; fill identifies each stage's bottleneck.}
  \label{fig:funnel}
  \Description{A left-to-right funnel of five narrowing stages: temporal
    feature importance and candidate construction, grouped perturbation,
    retraining, online A/B, and lineage and deprecation. Each stage is
    annotated with what it consumes, and fill colour indicates whether the
    stage is bottlenecked by amortized inference, training capacity, online
    traffic, or organizational coordination.}
\end{figure}

\section{System Design and Methodology}
\label{sec:system}

FORTE assembles deprecation candidates through the deprecation funnel illustrated in
Figure~\ref{fig:funnel}. Its stages differ in which resource they consume, and
online A/B traffic is the bottleneck: it admits only a handful of arms.
The funnel is therefore ordered so that stages bound by amortized inference
screen the full feature set before any candidate reaches a stage bound by
training capacity or online traffic.

\subsection{Temporal feature importance}
\label{subsec:temporal}

We set up a scheduled perturbation experiment for each model. A resilient
workflow reads a partition of the evaluation dataset and evaluates the current
production model.

At industry scale across a diverse model fleet we implement several safeguards
to ensure continuous data collection. Upstream issues can lead to partitions
being delayed or missing; we include partition sensors in our design to skip and
backfill. In a production setting, teams have variable model iteration cycles.
We define the current production model by implementing a production model sensor
to automatically update the model reference as serving traffic diverts to a newer
checkpoint.

For feature $X_j$ on day $t$, temporal feature importance is the signed
normalized change $\mathrm{FI}_j(t)=\Delta_M(\{X_j\};t)$ defined in
Equation~\eqref{eq:normalized-change}. The baseline and perturbed conditions use the same
checkpoint and the same day's examples. This paired comparison controls level
shifts shared by both conditions, while normalization by the baseline expresses
the result as a relative percentage. It does not remove permutation noise or
genuine changes in the model's sensitivity to a feature.

\stagelead{Candidate construction.} We express the temporal gate in the
conventional reliance orientation, $I_j(t)=-\mathrm{FI}_j(t)$, so larger values
indicate greater measured reliance. Across each configured lookback window, we
use mean importance, rank stability, and fitted slope as indicators for
deprecation candidates. By default, a candidate must have low mean importance,
remain stably near the bottom of the feature ranking, and have a non-positive
slope, with these conditions required to hold across every output head and
lookback window. The gate is a per-model configuration rather than a fixed rule:
owning teams set the window lengths, the filter thresholds, and which heads a
candidate must satisfy, according to operational concerns such as model release
frequency and the amount of post-release evidence required before nominating a
feature. The default is deliberately conservative because a candidate that
survives it still has to clear the more expensive gates downstream. Additional
signals include estimated feature costs. We use these signals to generate
multiple nested candidate sets.

Two examples in Figure~\ref{fig:temporal-rank-trajectories} show how
rank changes over time. Feature A remains relatively low-ranked but varies
substantially. Feature B is generally higher-ranked before moving sharply near
the end of the window. This variation motivates evaluating mean rank, stability,
and trend over a lookback window rather than using one observation.

\begin{figure}[t]
  \centering
  \begin{tikzpicture}
    \begin{axis}[
        width=\columnwidth,
        height=0.46\columnwidth,
        xmin=1, xmax=24,
        ymin=0, ymax=280,
        xtick={1,3,6,9,12,15,18,21,23.5},
        xticklabels={Jan,Feb,Mar,Apr,May,Jun,Jul,Aug,Sep},
        ytick={0,50,100,150,200,250},
        xlabel={Month},
        ylabel={Feature Rank},
        xlabel style={font=\scriptsize},
        ylabel style={font=\scriptsize},
        tick label style={font=\scriptsize},
        axis line style={draw=black!45},
        tick style={draw=black!45},
        grid=major,
        grid style={draw=black!10},
        legend style={
          at={(0.5,1.03)},
          anchor=south,
          font=\scriptsize,
          draw=none,
          legend columns=2,
          column sep=8pt,
        },
        legend cell align={left},
      ]
      \addplot[
        color=teal!75!black,
        solid,
        mark=none,
        line width=1pt,
      ] coordinates {
        (1,160) (2,195) (3,133) (4,205)
        (5,178) (6,180) (7,178) (8,138)
        (9,267) (10,200) (11,145) (12,132)
        (13,205) (14,147) (15,194) (16,118)
        (17,205) (18,200) (19,196) (20,150)
        (21,208) (22,111) (23,135) (24,175)
      };
      \addlegendentry{Feature A}

      \addplot[
        color=orange!85!black,
        solid,
        mark=none,
        line width=1pt,
      ] coordinates {
        (1,38) (2,40) (3,34) (4,27)
        (5,32) (6,33) (7,30) (8,24)
        (9,48) (10,28) (11,30) (12,34)
        (13,36) (14,55) (15,58) (16,52)
        (17,43) (18,49) (19,47) (20,48)
        (21,46) (22,65) (23,50) (24,120)
      };
      \addlegendentry{Feature B}
    \end{axis}
  \end{tikzpicture}
  \caption{Feature-rank trajectories for two features across
  scheduled observations. Lower ranks denote more highly ranked features. The
  observations retain their original order; the horizontal labels show the
  corresponding abbreviated month names.}
  \label{fig:temporal-rank-trajectories}
  \Description{A two-line chart showing feature rank from January through
  September. Feature A fluctuates between ranks 111 and 267. Feature B remains
  mostly between ranks 24 and 65 before moving to rank 120 in September. The
  vertical axis begins at 0 and increases upward.}
\end{figure}

We do not assert that this method rigorously identifies all non-correlated
candidates, but we assert that it will provide a signal that is validated in a
future gate.

\subsection{Grouped perturbation}
\label{subsec:grouped}

We take the nested feature sets and perform grouped perturbation on each to
validate their joint perturbation effect, capturing interactions and dependence.
This joint test addresses one operational consequence of feature dependence
\cite{hooker2021,nicodemus2010}: correlated candidates can substitute for one
another under individual permutation, whereas perturbing them together measures
reliance on their combined signal. Grouping does not produce conditional
importance or eliminate out-of-distribution perturbations.
The pooled inference run shares its principal fixed costs across candidate sets.
At this stage we seek to reduce the number of candidate sets carried into
retraining because each surviving set typically takes days to retrain, considerably longer than offline ablation. We first filter candidate
sets for metric neutrality, then retain up to $N$ passing sets ranked by their
potential cost savings from feature deprecation. The value of $N$ and the sizes
of these sets are team-specific choices reflecting cost priorities, retraining
capacity, and available A/B traffic.

\subsection{Retraining, online validation, launch review}
\label{subsec:validation}

We retrain a model with each candidate set removed and compare it with the
production baseline. Neutral performance metrics are the criterion to move to the online
validation stage.

We promote $S$ of the candidate sets that passed retraining to treatment arms.
A/B validation consists of running a test for multiple weeks and benchmarking each arm against the production control. A/B test completion has the following
criteria:

\begin{itemize}
  \item Online metric neutrality.
  \item User behavioral guardrail adherence.
\end{itemize}

Passing the above two allows the team to move this to human audit and launch
review to finally ship the deprecation.

\begin{figure}[!t]
  \centering
  \resizebox{0.85\columnwidth}{!}{%
  \begin{tikzpicture}[x=1cm, y=1cm, font=\small]

    \definecolor{fInf}{HTML}{FDEAD9}
    \definecolor{fPert}{HTML}{D5E3EF}

    \tikzset{
      pipe/.style={draw=black!35, line width=0.4pt, fill=black!5,
                   minimum height=0.46cm, minimum width=#1,
                   font=\scriptsize, align=center, inner sep=1pt},
      flow/.style={-{Stealth[length=1.5mm]}, black!55, line width=0.5pt},
    }

    \node[pipe=2.05cm] (parq)  at (1.05,0) {Parquet dataset};
    \node[pipe=2.05cm] (batch) at (3.60,0) {Batch ($B$ rows)};
    \node[pipe=2.05cm] (coll)  at (6.15,0) {Collate};

    \draw[flow] (parq)  -- (batch);
    \draw[flow] (batch) -- (coll);

    \draw[black!55, line width=0.5pt] (6.15,-0.23) -- (6.15,-0.62);
    \draw[flow] (6.15,-0.62) -- (6.15,-0.95);
    \draw[flow] (6.15,-0.62) -- (2.05,-0.62) -- (2.05,-0.95);

    \newcommand{\fortedict}[7]{%
      \fill[#3, draw=black!35, line width=0.4pt] (#1,-0.95) rectangle (#2,-2.33);
      \draw[black!35, line width=0.4pt] (#1,-1.29) -- (#2,-1.29);
      \node[font=\scriptsize] at ({(#1+#2)/2},-1.12) {#4};
      \node[right, font=\scriptsize] at ({#1+0.22},-1.50) {#5};
      \node[right, font=\scriptsize] at ({#1+0.22},-1.82) {#6};
      \node[right, font=\scriptsize] at ({#1+0.22},-2.14) {#7};
    }

    \fortedict{0.20}{3.90}{fInf}{\textbf{original}}
      {$f_1$: dense}{$f_2$: sparse}{$f_3$: ragged}
    \fortedict{4.30}{8.00}{fPert}{\textbf{perturbed}}
      {$f_1$: zero tensor}{$f_2$: empty pairs}{$f_3$: empty list}

    \draw[flow] (1.20,-2.33) -- (1.20,-2.85);
    \draw[flow] (2.60,-2.33) -- (2.60,-2.85);
    \draw[flow] (6.15,-2.33) -- (6.15,-2.85);
    \node[right, font=\scriptsize, black!60] at (1.26,-2.59) {$f_1$};
    \node[right, font=\scriptsize, black!60] at (2.66,-2.59) {$f_2$};
    \node[right, font=\scriptsize, black!60] at (6.21,-2.59) {$f_3$};

    \fill[white, draw=black!35, line width=0.4pt]
      (0.20,-2.85) rectangle (8.00,-4.23);
    \draw[black!35, line width=0.4pt] (0.20,-3.19) -- (8.00,-3.19);
    \node[font=\scriptsize] at (4.10,-3.02) {experiment (ablate $f_3$)};
    \node[right, font=\scriptsize] at (0.42,-3.40) {$f_1$: dense \textcolor{black!60}{(original)}};
    \node[right, font=\scriptsize] at (0.42,-3.72) {$f_2$: sparse \textcolor{black!60}{(original)}};
    \node[right, font=\scriptsize] at (0.42,-4.04) {$f_3$: empty list \textcolor{black!60}{(perturbed)}};

  \end{tikzpicture}%
  }

  \caption{Dataset virtualization (ablation): the \emph{original} dictionary
    holds typed batches; \emph{perturbed} holds type-appropriate nulls.
    Experiments draw each feature column ($f_n$) from one of these
    dictionaries.}
  \label{fig:dataset-virtualization}
  \Description{Two feature-keyed dictionaries (original and perturbed) feed
    into a third box representing one experiment that ablates feature f3: f1
    and f2 are pulled from original, f3 from perturbed.}
\end{figure}

\subsection{Feature lineage and deprecation}
\label{subsec:lineage}

FORTE automatically detects dereferenced features by maintaining a live
feature--model mapping. Once a feature has been removed from the model, FORTE
determines whether it is entirely dereferenced. Teams can then initiate the
formal deprecation process, which progressively blocks the feature and prevents
any model that still references it from training. Blocking and removing
features can have real costs if they need to be restored or revived, so
deprecation proceeds through a staged manual audit process:

\begin{itemize}
  \item \textbf{Dataloader block.} If a feature is attempted to be added back
    into a model it will trigger here.
  \item \textbf{Dataset block.} The feature is hidden from reads, and the block
    propagates to downstream leaf datasets.
  \item \textbf{Ingestion block.} Evaluation and training datasets block a
    blocked feature from being written.
  \item \textbf{Removal from logging.} The final step, which removes the feature
    entirely from logging.
\end{itemize}

\subsection{Dataset Virtualization and Pooled Execution}
\label{subsec:dataset-virtualization}

Individual-feature importance permutes or ablates a feature after reading in and
materializing the dataset, once per experiment. We instead store an
ablated/permuted in-memory copy of the entire batched dataset, so that all
experiments for a model resolve against a single materialization in one
distributed inference pass.

The evaluation dataset is laid out on disk in a columnar
storage format. Rows are read from these files
and collated into batches, converting each raw column into the tensor
format its feature registry schema declares: dense values, sparse
(index, value) pairs, or ragged variable-length lists. Collation yields a
tensor dictionary $D_{\text{original}}$ keyed by feature name. We build
$D_{\text{perturbed}}$ alongside it, mirroring $D_{\text{original}}$ in
keys and batch dimensions, each value the perturbed form of that
feature: a permuted batch under permutation, a type-appropriate null
under ablation.

As Figure~\ref{fig:dataset-virtualization} illustrates, an experiment is a
per-feature selection between the two dictionaries. Each inference worker receives both once per batch and
iterates over the configured experiment set: the baseline draws every
feature from $D_{\text{original}}$, and each subsequent experiment draws
its feature group $G$ from $D_{\text{perturbed}}$ and the remainder from
$D_{\text{original}}$, runs the already-loaded model, and tags the output
with an experiment index. Permutation and ablation share this loop,
differing only in how $D_{\text{perturbed}}$ is constructed. Model
loading, data I/O, actor startup, and GPU allocation amortize across
every experiment in the batch, so $p$ experiments cost one
materialization rather than $p$. Section~\ref{sec:evaluation} measures the
resulting runtime behavior (Figure~\ref{fig:runtime-scaling}).

\section{Practical Considerations for Production Deployment}
\label{sec:practical}

\subsection{Single-Permutation Estimation at Scale}
\label{subsec:single-permutation-proof}

Repeating permutations reduces Monte Carlo noise but multiplies inference cost,
which could instead be spent on inference over a larger dataset. We use a
theory-to-empirical bridge: following tractable ROC-AUC analyses
\cite{agarwal2005auc}, we first establish a
reference variance-scaling law for ROC-AUC, whose pairwise decomposition
supports a clean bounded-differences argument. We then test whether the resulting operating
conclusion extends empirically to PR-AUC and production feature-ranking
decisions. This second step supports an operational generalization; it does not
prove a PR-AUC scaling law.

Fix an evaluation dataset with $n_+$ positive and $n_-$ negative examples,
$n=n_++n_-$, a feature group $G$, and a frozen model. Let $\Pi_n$ denote the
finite set of permutations allowed by the evaluation implementation. For a
global permutation, $\Pi_n=S_n$; because FORTE permutes within inference
batches, its implemented space is the product of the corresponding
within-batch permutation sets. For $\pi\in\Pi_n$, let $A_G(\pi)$ denote the
tie-adjusted AUC after permuting only $G$.

The analysis is conditional on both the fixed evaluation data and the frozen
model architecture and evaluation state. By \emph{row-wise inference}, we mean
that each score has the form $s_i=f(x_i)$: once model state is fixed, changing
the features assigned to row $i$ cannot change another row's score. This
condition excludes inference-time batch normalization that recomputes statistics
from the current batch, joint listwise or slate-level scoring and normalization,
cross-candidate attention, and GNN neighborhood aggregation when perturbing one
row changes neighboring representations or scores. A model trained with a
listwise objective still satisfies the condition if its deployed inference is
row-wise. The Pinterest models evaluated here use frozen evaluation state and
score examples independently.

Each admissible permutation breaks the alignment between $G$ and the remaining
features in a different way. Since no admissible permutation is privileged,
the natural fixed-dataset estimand is their uniform average,
\[
  \mu_{n,G}
  = \mathbb{E}_{\pi\sim\operatorname{Unif}(\Pi_n)}[A_G(\pi)].
\]
This estimand is defined over the permutation space implemented by the system;
it need not equal the average over all of $S_n$ when permutation is restricted
to batches.

For $R$ independent uniform draws with replacement $\pi_1,\ldots,\pi_R$ from
$\Pi_n$, define
\[
  \widehat A_{R,G}=\frac{1}{R}\sum_{r=1}^R A_G(\pi_r).
\]
The standard Monte Carlo identities give
\[
  \mathbb{E}[\widehat A_{R,G}]=\mu_{n,G},
  \qquad
  \operatorname{Var}(\widehat A_{R,G})=\frac{\sigma_{n,G}^2}{R},
  \qquad
  \sigma_{n,G}^2=\operatorname{Var}_{\pi}(A_G(\pi)).
\]
If $A_0>0$ is the fixed, unpermuted AUC, FORTE's normalized importance estimate is
\[
  \widehat\Delta_R(G)
  =100\frac{\widehat A_{R,G}-A_0}{A_0}.
\]
It has expectation $100(\mu_{n,G}-A_0)/A_0$ and variance
$10^4\sigma_{n,G}^2/(R A_0^2)$. A single draw is therefore already unbiased for
$\mu_{n,G}$, and averaging $R$ draws only reduces variance by a factor of $1/R$
without changing the fixed-dataset estimand.

The remaining question is how large $\sigma_{n,G}^2$ itself is, since that is what
sets the precision a single draw achieves.

\begin{proposition}
\label{prop:auc-permutation-variance}
Conditional on the fixed evaluation dataset and frozen model state, assume
$n_+,n_-=\Theta(n)$ and row-wise inference as defined above. Then, for either
global permutation or FORTE's product of within-batch permutation spaces,
\[
  \sigma_{n,G}^2
  =\operatorname{Var}_{\pi\sim\operatorname{Unif}(\Pi_n)}[A_G(\pi)]
  =O\!\left(\frac{1}{n}\right).
\]
\end{proposition}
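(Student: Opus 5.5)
We will prove the bound by a bounded-differences (McDiarmid-type) argument on the permutation $\pi$. The paper itself signals this route when it says the pairwise decomposition of ROC-AUC "supports a clean bounded-differences argument." We first write the tie-adjusted AUC in its pairwise form,
\[
  A_G(\pi)=\frac{1}{n_+n_-}\sum_{i:\,y_i=1}\ \sum_{j:\,y_j=0}
  \Big(\mathbf{1}[s_i^\pi>s_j^\pi]+\tfrac12\mathbf{1}[s_i^\pi=s_j^\pi]\Big).
\]
Here $s_i^\pi=f(x_i^{\pi})$, and $x_i^\pi$ is row $i$ with its $G$-coordinates replaced by those of row $\pi(i)$. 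Row-wise inference is the key structural input: changing the $G$-block assigned to row $i$ changes only $s_i^\pi$. Labels never move under $\pi$, so the positive/negative partition is fixed.

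\textbf{Step 1 (transposition sensitivity).} Let $\pi'=\pi\circ(k\,l)$ for a transposition. Then only the scores $s_k,s_l$ change. Each score enters at most $\max(n_+,n_-)$ pairs, and each pair contributes a value in $[0,1]$. Hence
\[
  |A_G(\pi)-A_G(\pi')|\le \frac{2\max(n_+,n_-)}{n_+n_-}=:c_n=O(1/n),
\]
using $n_+,n_-=\Theta(n)$.

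\textbf{Step 2 (variance from bounded differences).} A uniform permutation on $S_m$ can be generated by $m-1$ independent choices, as in the Fisher--Yates shuffle. Step $t$ swaps position $t$ with a uniformly chosen position in $\{t,\dots,m\}$. The product of within-batch permutation spaces is generated the same way, by independent Fisher--Yates shuffles within each batch, so the total number of coordinates is at most $n$.

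Changing a single coordinate alters the resulting permutation by composition with at most one transposition; more precisely, at most a bounded number of entries are relocated. Step 1 then gives a per-coordinate sensitivity of $O(c_n)$. The Efron--Stein inequality (the variance form of McDiarmid) would then yield $\sigma_{n,G}^2\le\frac12\sum_t c_t^2\le O(n)\cdot O(1/n^2)=O(1/n)$. This argument covers both the global and the batched cases simultaneously.

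\textbf{Main obstacle.} The delicate point is the coordinate-change claim in Step 2. Altering one Fisher--Yates choice can, a priori, cascade, because later swaps act on positions that the earlier choice moved. To settle this, I would first check that the map $(U_1,\dots,U_{m-1})\mapsto\pi$ is a composition of transpositions $\tau_{m-1}\cdots\tau_1$. Changing $\tau_t$ to $\tau_t'$ gives $\pi'=\pi\cdot(\text{conjugate of }\tau_t^{-1}\tau_t')$, and $\tau_t^{-1}\tau_t'$ moves at most three points. Conjugation preserves the number of moved points, so $\pi$ and $\pi'$ differ on at most three positions. Step 1 extended to three rows then gives sensitivity at most $3c_n$, and no cascading occurs.

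Should this bookkeeping prove awkward, the fallback is the Efron--Stein inequality for uniform permutations via random transpositions, i.e. the Chatterjee exchangeable-pair bound $\operatorname{Var}(A)\le\frac12\mathbb{E}[(A(\pi)-A(\pi\tau))^2]\cdot O(n)$. Alternatively, one can compute the variance of the two-sample U-statistic directly, using the negative dependence of sampling without replacement. Either route yields $O(n\cdot c_n^2)=O(1/n)$.
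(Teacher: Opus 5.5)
Your proposal is correct and follows the paper's route. The paper also uses the pairwise form of the tie-adjusted AUC. It observes that under row-wise inference a transposition changes only two scores, so $A_G$ moves by at most $c_n=2\max(n_+,n_-)/(n_+n_-)=O(1/n)$. It then applies a bounded-differences variance bound of order $n c_n^2$, blockwise in the batched case.

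The only difference is in that last step. The paper simply cites McDiarmid's bounded-differences inequality for uniform (independent) permutations. You instead derive the variance bound yourself from a Fisher--Yates encoding and Efron--Stein. What makes your self-contained version work is the observation that changing one coordinate gives $\pi'=\pi\rho$, where $\rho$ is a conjugate of $\tau_t^{-1}\tau_t'$ and so moves at most three points. You should replace the ``at most one transposition'' phrasing in Step~2 with this three-point statement, since the former is not accurate.
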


\begin{proof}
Fix the dataset and frozen model state, and view
$A_G:\Pi_n\to[0,1]$ as a function of the permutation alone. For a transposition
$\tau=(a\,b)$ acting within one permutation block,
$\pi$ and $\pi\circ\tau$ differ only in the values assigned to rows $a$ and
$b$; under row-wise inference only those two scores change. A pairwise
indicator can therefore change only for pairs $(k,l)\in I^+\times I^-$ with
$\{k,l\}\cap\{a,b\}\neq\emptyset$, of which there are at most
$2\max(n_+,n_-)\leq 2n$. Each contributes weight $1/(n_+n_-)$, giving the
per-swap bound
\[
  \bigl|A_G(\pi)-A_G(\pi\circ\tau)\bigr|
  \;\leq\; c_n
  \;=\; \frac{2n}{n_+n_-}
  \;=\; O\!\left(\tfrac{1}{n}\right)
\]
under $n_+,n_-=\Theta(n)$. A bounded-differences inequality for uniform random
permutations \cite{mcdiarmid2002}, applied globally or independently
across the within-batch blocks, then gives $\sigma_{n,G}^2\leq Cnc_n^2=O(1/n)$ for
a universal constant $C$.
\end{proof}

Two levers therefore reduce the variance of a permutation estimate, and they
act on different axes: repetition divides it by $R$ at fixed evaluation size,
while a larger evaluation sample shrinks $\sigma_{n,G}^2$ itself. Repeated
permutations are best understood as variance reduction rather than a change of
estimand, and one whose marginal benefit is small in the large-$n$ regime
characteristic of production evaluation tables.

\stagelead{Empirical bridge to PR-AUC.}
The Monte Carlo identities extend directly to PR-AUC. Let $P_G(\pi)$ denote
PR-AUC after permutation and
$\widehat P_{R,G}=R^{-1}\sum_{r=1}^R P_G(\pi_r)$. Then
$\mathbb{E}[\widehat P_{R,G}]=\mathbb{E}_{\pi}[P_G(\pi)]$ and
$\operatorname{Var}(\widehat P_{R,G})=\operatorname{Var}_{\pi}(P_G(\pi))/R$; the
same statements hold after subtracting and normalizing by the fixed baseline.
However, the $O(1/n)$ rate in Proposition~\ref{prop:auc-permutation-variance}
uses AUC's pairwise-comparison representation and does not establish a
corresponding sample-size rate for PR-AUC, particularly for highly imbalanced
heads. We evaluate that behavior empirically below.

The $1/R$ Monte Carlo identities hold for any fixed architecture because they
concern averages of independent permutation draws. Only the $O(1/n)$ reference
rate depends on row-wise sensitivity; architectures outside that scope require
a separate sensitivity analysis.

We test this rate directly. We evaluated ten permutations for 381 feature
groups using 15k and 150k examples. Across two output heads and two metrics,
increasing the sample size by an order of magnitude reduced the median per-feature permutation standard
deviation by $2.58$--$3.16\times$. The comparison is descriptive
rather than a direct test of
Proposition~\ref{prop:auc-permutation-variance}: the proposition bounds the
variance rather than fixing its constant, the two samples may differ in
composition and class balance, and the proposition does not establish a
corresponding rate for PR-AUC. This comparison forms the empirical bridge:
PR-AUC does not inherit the proposition, but its observed variance reduction
tracks the ROC-AUC reference closely enough that it does not alter the downstream
feature-ranking decisions reported below.

At 150k examples, a single permutation had a median leave-one-out rank
correlation of $0.907$--$0.959$ with the estimate obtained by averaging the
other nine permutations, and median top-20 overlap of $92$--$100\%$. On PR-AUC,
Clickthrough had the weaker standard-deviation reduction, $2.580\times$ compared
with Closeup's $3.082\times$. However, Closeup had the weaker leave-one-out
correlation, $0.907$ compared with Clickthrough's $0.942$. Thus, at the
larger evaluation size, permutation randomness had limited effect on the
feature-ranking decisions for which FORTE is used.

\begin{figure}[t]
  \centering
  \begin{tikzpicture}
    \begin{axis}[
        width=\columnwidth,
        height=0.46\columnwidth,
        xmin=10, xmax=155,
        ymin=0.55, ymax=0.95,
        xtick={15,30,50,80,150},
        xticklabels={{15k\\(10)},{30k\\(5)},
          {50k\\(3)},{80k\\(2)},{150k\\(1)}},
        ytick={0.55,0.65,0.75,0.85,0.95},
        xlabel={Dataset size (permutations)},
        ylabel={Spearman $\rho$ with $1.5\mathrm{M}\!\times\!1$},
        xlabel style={font=\scriptsize},
        ylabel style={font=\scriptsize},
        tick label style={font=\scriptsize},
        xticklabel style={align=center},
        axis line style={draw=black!45},
        tick style={draw=black!45},
        grid=major,
        grid style={draw=black!10},
        legend style={
          at={(0.5,1.03)},
          anchor=south,
          font=\scriptsize,
          draw=none,
          legend columns=2,
          column sep=7pt,
          row sep=1pt,
        },
        legend cell align={left},
      ]

      \addplot[
        color=teal!75!black,
        solid,
        mark=*,
        mark size=1.7pt,
        line width=1pt,
      ] coordinates {
        (15,0.7570)
        (30,0.8465)
        (50,0.8798)
        (80,0.9204)
        (150,0.9377)
      };
      \addlegendentry{AUC Closeup}

      \addplot[
        color=orange!85!black,
        solid,
        mark=square*,
        mark size=1.7pt,
        line width=1pt,
      ] coordinates {
        (15,0.8057)
        (30,0.8216)
        (50,0.8551)
        (80,0.8941)
        (150,0.9194)
      };
      \addlegendentry{AUC Clickthrough}

      \addplot[
        color=blue!65!black,
        solid,
        mark=triangle*,
        mark size=1.9pt,
        line width=1pt,
      ] coordinates {
        (15,0.5762)
        (30,0.7346)
        (50,0.7909)
        (80,0.8266)
        (150,0.8564)
      };
      \addlegendentry{PR-AUC Closeup}

      \addplot[
        color=red!70!black,
        solid,
        mark=diamond*,
        mark size=1.9pt,
        line width=1pt,
      ] coordinates {
        (15,0.6556)
        (30,0.7593)
        (50,0.8004)
        (80,0.8311)
        (150,0.8818)
      };
      \addlegendentry{PR-AUC Clickthrough}

    \end{axis}
  \end{tikzpicture}

  \caption{AUC and PR-AUC feature-ranking correlations across approximately
  matched row-evaluation budgets, measured against a separate
  $1.5\mathrm{M}\!\times\!1$ reference. Tick labels show dataset size followed by the number of
  permutations in parentheses.}
  \label{fig:single-permutation}
  \Description{A line plot of AUC and PR-AUC feature-ranking correlations for
  the Closeup and Clickthrough heads against dataset size. Each x-axis tick also gives
  the corresponding permutation count. All four correlations generally increase
  as the dataset grows and the number of permutations decreases.}
\end{figure}

Figure~\ref{fig:single-permutation} shows that feature-ranking agreement with the
larger reference rises monotonically as the row-to-permutation ratio increases.
All four metric--head combinations agree more at $150\mathrm{k}\!\times\!1$ than
at $15\mathrm{k}\!\times\!10$, which processes the same number of
row-evaluations, by between $0.11$ (AUC Clickthrough) and $0.28$ (PR-AUC Closeup) in
Spearman $\rho$, with no reversal at any intermediate configuration.

Across both metrics and heads, the evidence supports $R=1$ at the evaluated
production-scale sample size of roughly 150k examples when the goal is feature
ranking, and allocating compute to more representative rows when possible. Our
measurements bear on rank correlation and top-20 overlap, not on the accuracy of
any individual feature's importance value. The 15k and
150k samples are not confirmed to be nested, only ten repeats are available, and
the 1.5M reference uses one permutation, so it is a larger-sample comparison
rather than ground truth. Three to five repeats remain useful for smaller samples
or noisy tail metrics when more representative rows are unavailable.

\subsection{Engineering for Production-Scale Evaluation}
\label{subsec:production-scale-evaluation}

\subsubsection{Feature storage and type-awareness during perturbation}

Standard libraries define permutation as a row-level operation. FORTE instead
applies it to the batched representation the model actually consumes, which
requires a custom permutation kernel per feature type. Datasets are defined at a
row level and batched at the dataloader--model boundary, so operating after that
boundary is what lets a single materialization serve every experiment
(Section~\ref{subsec:dataset-virtualization}).

Ablation raises the same requirement in a different form: each of those feature
types needs a null or empty representation. Where feature producers and
consumers span an organization, these definitions cannot live in the evaluation
job alone. We formalize them in a Feature Registry that categorizes every
supported feature across the ML stack, so a type-appropriate null resolves
identically in training and in evaluation.

\subsubsection{Production model sensor}

Models are iterated on regularly. To support
automatic daily feature importance, FORTE resolves the model from serving
traffic, ensuring we can track importance on the latest model across our model
fleet.

\subsubsection{Randomization}
\label{subsec:randomization}

Production evaluation tables are generally ordered by ingestion time, partition
key, or some upstream join order. They do not guarantee random ordering.
Randomizing row order before taking a subset reduces dependence on this physical
layout. A full distributed shuffle does so at the cost of communicating data
across workers \cite{attia2016shuffling}. We define a two-stage shuffle, related
to block-aware hierarchical shuffling \cite{xu2022corgipile}, to reduce layout
correlation: a Spark job randomizes rows within each output file, and the
inference layer dispatches files to workers in randomized order. Together, the
arrival order of inference batches is decoupled from the source table's physical
layout.

For international data we take care not to oversample any single market; the
fairness implications are discussed in the ethical considerations section.

\raggedbottom
\AddToHookNext{shipout/after}{\flushbottom}

\section{Evaluation}
\label{sec:evaluation}

Unless otherwise noted, the following evaluations use the same frozen production Pin Ranker checkpoint (Section~\ref{subsec:pinranker}) and sample rows from its associated evaluation dataset using the two-stage randomization procedure in Section~\ref{subsec:randomization}. Each subsection reports its evaluation-set size and experiment-specific configuration.

\subsection{Scaling behavior}

We compare a legacy (one-pass-per-feature) implementation of feature perturbation experiments against
our pooled execution implementation, isolating perturbation experiments from
dataset sampling.  We vary two axes, dataset size and number of perturbation
experiments, choosing a random subset of features for each experiment.
Evaluation-set sizes are 30k (small), 300k (medium), and 3M (large) rows, and
all runs use the same compute allocation.

\begin{figure}[t]
  \centering
  \begin{tikzpicture}
    \begin{loglogaxis}[
        width=0.94\columnwidth,
        height=0.52\columnwidth,
        xlabel={Experiments per run (log scale)},
        ylabel={Wall-clock time (minutes; log scale)},
        xlabel style={font=\scriptsize},
        ylabel style={font=\scriptsize},
        tick label style={font=\scriptsize},
        xmin=0.8, xmax=1500,
        ymin=3, ymax=720,
        xtick={1,10,100,1000},
        xticklabels={1,10,100,1000},
        ytick={5,15,60,360},
        yticklabels={5,15,60,360},
        grid=major,
        grid style={draw=black!10},
        axis line style={draw=black!45},
        tick style={draw=black!45},
        legend style={
          at={(0.5,1.03)},
          anchor=south,
          legend columns=3,
          font=\scriptsize,
          draw=none,
          fill=none,
          column sep=5pt,
          inner sep=0pt
        },
        legend cell align={left},
        clip mode=individual
      ]

      \addplot[
        color=blue!65!black,
        dashed,
        line width=0.6pt,
        opacity=0.55,
        forget plot,
        domain=1:150,
        samples=2
      ] {4.266*x};

      \addplot[
        color=orange!85!black,
        dashed,
        line width=0.6pt,
        opacity=0.55,
        forget plot,
        domain=1:30,
        samples=2
      ] {23.784*x};

      \addplot[
        color=teal!75!black,
        dashed,
        line width=0.6pt,
        opacity=0.55,
        forget plot,
        domain=1:16,
        samples=2
      ] {44.982*x};

      \addplot[
        color=blue!65!black,
        mark=*,
        mark size=1.5pt,
        line width=0.9pt
      ] coordinates {
        (1,4.266)
        (10,4.386)
        (100,5.382)
        (500,10.500)
        (1000,19.134)
      };
      \addlegendentry{30k: $N^{0.20}$}

      \addplot[
        color=orange!85!black,
        mark=square*,
        mark size=1.4pt,
        line width=0.9pt
      ] coordinates {
        (1,23.784)
        (10,24.450)
        (100,33.132)
        (500,72.216)
        (1000,123.186)
      };
      \addlegendentry{300k: $N^{0.23}$}

      \addplot[
        color=teal!75!black,
        mark=triangle*,
        mark size=1.8pt,
        line width=0.9pt
      ] coordinates {
        (1,44.982)
        (100,109.734)
        (200,178.614)
        (300,249.984)
      };
      \addlegendentry{3M: $N^{0.27}$}

    \end{loglogaxis}
  \end{tikzpicture}

  \caption{Runtime grows sublinearly with experiment count for all three
  evaluation-set sizes. Dashed lines show linear scaling from each series'
  single-experiment runtime; the 3M-row set was measured through 300
  experiments.}
  \label{fig:runtime-scaling}
  \Description{A log-log plot of wall-clock time against the number of
    perturbation experiments per run, for evaluation sets of 30k, 300k, and 3M
    rows. All three series rise far more slowly than the dashed linear-scaling
    references anchored at each series' single-experiment runtime.}
\end{figure}

For our largest dataset (3M rows), Figure~\ref{fig:runtime-scaling} shows a
$54\times$ reduction in elapsed time over 300 experiments. The empirical scaling trend shows clear amortization:
runtime grows as $N^{0.20}$, $N^{0.23}$, and $N^{0.27}$ in experiment count for
the small, medium, and large evaluation sets respectively, against the $N^{1}$
scaling of one evaluation pass per feature. For the 30k-row set, a
$1000\times$ increase in experiments costs only $4.5\times$ the wall-clock time.

Adding an additional experiment costs 41\,s at 3M rows, against 45 minutes for
a separate pass under the one-pass-per-feature approach. The exponent rises with
evaluation-set size because forward-pass cost takes a larger share of the total
as rows grow: model loading, data I/O, and actor startup are paid once per run,
while forward passes and final writes grow with the experiment count. It
remains far below linear throughout.

\begin{figure*}[!t]
  \centering
  \pgfplotstableread[col sep=comma]{
rank,individual,shapley
1,0.7351925554181175,0.265512905047815
2,0.6800957153131881,0.23040110816654596
3,0.4410925627218412,0.5517175613006062
4,0.3317078215272971,0.1615030707057486
5,0.3261642382771162,0.07326405083788765
6,0.2880215697383941,0.20578765304867117
7,0.23968955722617458,0.2487229144379357
8,0.2144766067564936,0.08019184660591944
9,0.18401183664886106,-0.050436555582061504
10,0.17540764683909557,0.09473902640387755
11,0.15445529481788997,0.12255972245392568
12,0.15174761139271706,0.08369621519397787
13,0.14101342119981938,0.1325588685649546
14,0.12305426207263935,-0.04159660639996702
15,0.11788691596584068,0.08007244874311266
16,0.11408220751633676,0.06655582458343112
17,0.1038373297823597,0.06933150869185892
18,0.10112976518860849,0.09918728975579008
19,0.0940309161399033,0.05356639173287599
20,0.08075997740232455,0.06405406242276342
21,0.07159791310761822,0.016100576615603997
22,0.06306380457335459,0.04919164038575692
23,0.060874003957741056,0.0322903490150012
24,0.05903105843884583,0.04320752978814209
25,0.05519716565216492,0.045974363109422864
26,0.05279385768086147,0.03170087914193355
27,0.05161752219393996,-0.07453109947664761
28,0.05120035065618695,0.03342491404286511
29,0.05108981427776531,0.030269306055519595
30,0.04725982500599342,0.019166848711720332
31,0.04722634041285371,0.023057737106445847
32,0.04717169898935801,0.028227376972302782
33,0.046561077332189275,0.04883003731525278
34,0.04286344167201244,0.029151009765277243
35,0.04048208724470977,0.023049441818242766
36,0.040290348208850676,0.02481490369067766
37,0.038357869057768244,0.029044404765982745
38,0.03567261508964154,-0.0018469331442090315
39,0.03283963067246543,0.039548744216404655
40,0.02966004821042973,0.010866032847847332
41,0.027766677357806968,0.02065415092750158
42,0.02753664656679093,-0.008577366977655019
43,0.027445870860394093,0.017687989869124822
44,0.02649027352230068,0.021450106570601736
45,0.02319880773051505,0.011771206265702753
46,0.021707131842105282,0.016146443479220358
47,0.021202494102909416,0.011934251071112054
48,0.020340602714651927,0.011424816252441895
49,0.02017798235758149,0.011604203062418883
50,0.020076672962997484,0.012735401385949325
51,0.019954061467034843,0.02388874647692495
52,0.01928011767429987,0.00045971939271227425
53,0.019193551888522866,0.0023651892252157304
54,0.01897502614761132,0.023445944522037805
55,0.0181346404457264,0.006756151179930811
56,0.016707250269873658,0.011198331371048751
57,0.016572454813296954,0.012327751747614777
58,0.015222281376519537,0.00990543199953271
59,0.014992318153368385,0.012453961614814959
60,0.014858154897285678,0.007022447391629658
61,0.013692659944923743,0.013637146025242065
62,0.013129131349050568,0.004351703164812653
63,0.012235993333923736,0.03280502984518707
64,0.011479811772941224,0.0024735277016200543
65,0.011198335226126583,-0.14425283161038022
66,0.010982818753701615,0.007939827218657942
67,0.010965014025378497,0.0030676617868932088
68,0.010942464121208445,-0.005174063999993458
69,0.010810215386789099,0.007291900313039193
70,0.010297875201176468,0.012999216046146187
71,0.009950569015349554,-0.0018973148681629332
72,0.009792386840904478,0.0008310340233387975
73,0.008699724793142476,-0.004734172908455851
74,0.008679711182176596,0.010858087007440087
75,0.008542077433281912,-0.007291815620915748
76,0.007958705615062652,0.0004438675258969793
77,0.007503615267498973,0.00373694612627678
78,0.0070744708453540275,-0.0044850974972205524
79,0.00687460206960093,-0.0018347976663662137
80,0.006781171826184073,0.0006700847393025899
81,0.006777023010449365,0.004221762603146348
82,0.006640925402950288,0.009020306644891294
83,0.006293833377995715,0.004384984220928754
84,0.006069215069266372,-0.09005853086555794
85,0.005760260274647553,0.003889250172290709
86,0.005262447627435981,-0.0002924774124361429
87,0.005208346747087572,0.0022282818438217103
88,0.00512432063290133,0.0021947584216355748
89,0.004924001061699956,0.0035826712053103394
90,0.0048038051180622505,0.0033688258226486217
91,0.004663993332703739,-0.07975821552633025
92,0.004636278156644297,1.7137181831622383e-05
93,0.004556627063739459,0.002519870028396451
94,0.004228709338970038,-0.0011335517744046266
95,0.0039413602605838545,-0.016131617507560408
96,0.00391612453251522,0.000313422426153606
97,0.0039088188271711585,0.0005506692650838613
98,0.003865930398613263,-0.0005884324071291143
99,0.0038365016419147856,0.0032086311356119613
100,0.0036338587633029927,0.0024186297284713096
101,0.0033405200846840662,0.0029033725219055517
102,0.0032376933501710036,0.0014385040752955714
103,0.0031341743384892227,0.0030286049801642226
104,0.0030384177029927142,0.0006756304808467001
105,0.002857587650938319,0.000637572007950428
106,0.0025929607748565657,0.0018284295273546885
107,0.0023845709529061307,0.002934682663800118
108,0.0023520348073946123,0.0011122708381112317
109,0.002336621248197261,-0.012156159045778294
110,0.0022561403921797073,-0.03385113666322863
111,0.002086844180044984,5.393990516656122e-05
112,0.001907979913115678,0.001135738839480585
113,0.0019058914775587367,-0.0020570201068593587
114,0.0018809152983551793,-0.002322204089035962
115,0.0018586861948415446,-0.0004258447940742371
116,0.001854442196524373,0.0010352124926505085
117,0.0017362690300172484,-0.0023808913981141987
118,0.0016820761985857657,0.0010555088901901418
119,0.0015763014473584554,0.0010241363354087068
120,0.0015452344629454699,-0.005071039589696935
121,0.0015126116542520539,0.0012382607612479123
122,0.0014930238679294375,0.0006148490771176164
123,0.0014537814618172866,0.0008507926314556935
124,0.001407561495301442,0.0017468874982828285
125,0.0013293127300282191,0.0033631910303304592
126,0.0012802121677948307,0.0016810214359802094
127,0.0012257210094295421,0.0012667851287464823
128,0.0011968408746363362,0.01309380032425704
129,0.0011481398443324532,0.0001868378522957296
130,0.00114771798562458,0.0006938279698009642
131,0.001132603194060664,0.00033204602635626793
132,0.0011245097351529907,-0.002116294505968783
133,0.0011141192615106276,6.674293815108062e-05
134,0.001086067567940408,0.0004239394295150717
135,0.0010728778738612215,0.0006444717325916116
136,0.0009475652777882219,0.00047395564867931255
137,0.0009404215884745277,-0.00016519392302389717
138,0.0009315237778256811,0.0006876298952748423
139,0.0009092826295806471,0.002394098446413518
140,0.0008680103956762487,-0.008044157016005532
141,0.0008672593638366213,-0.002580594282733534
142,0.0008311618026594677,-0.00023459628292358953
143,0.0007913252397050649,0.0005006396616125334
144,0.000777249084206013,0.004166759572412751
145,0.0007505148177128989,0.0006370190593736044
146,0.000722378076719984,5.430276960870229e-06
147,0.0007098242553147961,0.0003970138885837922
148,0.0006841746004470978,-0.0009289034651893347
149,0.0006791478427311198,-6.231574193742983e-05
150,0.0006653679577831002,-0.0003325464588952498
151,0.0006573906862514567,0.0017513530793797471
152,0.0006566508177819986,-0.0003521715135860748
153,0.0006509771708710237,0.0003959811274850997
154,0.0006483639090591801,-1.1195381846182407e-05
155,0.000620644913913519,0.0001814027418334696
156,0.0005682189829845887,0.0008664764118372498
157,0.0005474962028599994,4.0860613586182756e-05
158,0.0005349672053155822,0.004192284300762026
159,0.0005305992345817568,2.5053321716803773e-05
160,0.000491209941725784,0.0006944955199119085
161,0.0004481634630538973,0.0003425306838196765
162,0.00041896781556249483,-0.00018283717783132007
163,0.0003970058990293314,0.001604483412299934
164,0.0003746390157212419,0.00018811995306334633
165,0.0003531597689496335,-0.0004421470702474825
166,0.00032117797396320785,0.0009296611624420387
167,0.0003134423311342785,-0.002654035692258201
168,0.00029765612028899506,-0.00027282110450789585
169,0.00027558271669414914,0.00012670027881672222
170,0.00025513975581121666,-0.0003274735846513846
171,0.0001621498788324109,-0.0019784183045674637
172,0.0001463065290754208,1.959122348355047e-05
173,7.608468303636812e-05,-4.306550064462899e-05
174,6.626574568348889e-05,-0.00016430895696211006
175,1.001650000962931e-05,-6.91918909580836e-05
176,2.147190227849826e-06,3.68922979771702e-05
177,0.0,0.0
178,0.0,0.0
179,0.0,0.0
180,0.0,0.00056151710693646
181,0.0,0.0
182,0.0,0.0
183,0.0,0.0
184,0.0,0.0
185,0.0,0.0
186,0.0,0.0
187,0.0,0.0
188,0.0,0.0
189,0.0,-0.0026295001623369172
  }\individualshapleydata
  \def\shapleycibars{%
    \draw[red!40, line width=0.35pt]
      (axis cs:1,0.25195735055753743) -- (axis cs:1,0.27906845953809256);
    \draw[red!40, line width=0.35pt]
      (axis cs:2,0.20769115856155054) -- (axis cs:2,0.25311105777154136);
    \draw[red!40, line width=0.35pt]
      (axis cs:3,0.5295789595176313) -- (axis cs:3,0.5738561630835809);
    \draw[red!40, line width=0.35pt]
      (axis cs:4,0.14937436106756366) -- (axis cs:4,0.17363178034393348);
    \draw[red!40, line width=0.35pt]
      (axis cs:5,-0.004294018625222051) -- (axis cs:5,0.15082212030099734);
    \draw[red!40, line width=0.35pt]
      (axis cs:6,0.197954936512119) -- (axis cs:6,0.21362036958522332);
    \draw[red!40, line width=0.35pt]
      (axis cs:7,0.23811359270533977) -- (axis cs:7,0.25933223617053175);
    \draw[red!40, line width=0.35pt]
      (axis cs:8,0.011518187915362105) -- (axis cs:8,0.14886550529647677);
    \draw[red!40, line width=0.35pt]
      (axis cs:9,-0.12600528235144343) -- (axis cs:9,0.025132171187320404);
    \draw[red!40, line width=0.35pt]
      (axis cs:10,0.08908529035816723) -- (axis cs:10,0.10039276244958786);
    \draw[red!40, line width=0.35pt]
      (axis cs:11,0.1187579700143865) -- (axis cs:11,0.1263614748934649);
    \draw[red!40, line width=0.35pt]
      (axis cs:12,0.0700931929831047) -- (axis cs:12,0.09729923740485104);
    \draw[red!40, line width=0.35pt]
      (axis cs:13,0.12823695268769475) -- (axis cs:13,0.1368807844422144);
    \draw[red!40, line width=0.35pt]
      (axis cs:14,-0.053939260651647086) -- (axis cs:14,-0.029253952148286952);
    \draw[red!40, line width=0.35pt]
      (axis cs:15,0.07185574016795468) -- (axis cs:15,0.08828915731827064);
    \draw[red!40, line width=0.35pt]
      (axis cs:16,0.06223779433905022) -- (axis cs:16,0.07087385482781203);
    \draw[red!40, line width=0.35pt]
      (axis cs:17,0.06564522353560887) -- (axis cs:17,0.07301779384810897);
    \draw[red!40, line width=0.35pt]
      (axis cs:18,0.08766271688489874) -- (axis cs:18,0.11071186262668145);
    \draw[red!40, line width=0.35pt]
      (axis cs:19,0.04928873797936812) -- (axis cs:19,0.057844045486383854);
    \draw[red!40, line width=0.35pt]
      (axis cs:20,0.06163577250529298) -- (axis cs:20,0.06647235234023385);
    \draw[red!40, line width=0.35pt]
      (axis cs:21,0.010392615665401169) -- (axis cs:21,0.021808537565806824);
    \draw[red!40, line width=0.35pt]
      (axis cs:22,0.04570166242016352) -- (axis cs:22,0.05268161835135032);
    \draw[red!40, line width=0.35pt]
      (axis cs:23,0.028867930082797963) -- (axis cs:23,0.03571276794720443);
    \draw[red!40, line width=0.35pt]
      (axis cs:24,0.03701217594065928) -- (axis cs:24,0.049402883635624895);
    \draw[red!40, line width=0.35pt]
      (axis cs:25,0.0447209325564861) -- (axis cs:25,0.04722779366235963);
    \draw[red!40, line width=0.35pt]
      (axis cs:26,0.027941710197496913) -- (axis cs:26,0.03546004808637019);
    \draw[red!40, line width=0.35pt]
      (axis cs:27,-0.0926302024982404) -- (axis cs:27,-0.056431996455054825);
    \draw[red!40, line width=0.35pt]
      (axis cs:28,0.032640832479686514) -- (axis cs:28,0.0342089956060437);
    \draw[red!40, line width=0.35pt]
      (axis cs:29,0.02870407020315681) -- (axis cs:29,0.03183454190788238);
    \draw[red!40, line width=0.35pt]
      (axis cs:30,0.013701494656909724) -- (axis cs:30,0.024632202766530942);
    \draw[red!40, line width=0.35pt]
      (axis cs:31,0.021682240522757846) -- (axis cs:31,0.024433233690133848);
    \draw[red!40, line width=0.35pt]
      (axis cs:32,0.017680659649853343) -- (axis cs:32,0.03877409429475222);
    \draw[red!40, line width=0.35pt]
      (axis cs:33,0.045437392245721166) -- (axis cs:33,0.0522226823847844);
    \draw[red!40, line width=0.35pt]
      (axis cs:34,0.026646548808853955) -- (axis cs:34,0.03165547072170053);
    \draw[red!40, line width=0.35pt]
      (axis cs:35,0.021172710881772302) -- (axis cs:35,0.02492617275471323);
    \draw[red!40, line width=0.35pt]
      (axis cs:36,0.0230467553381794) -- (axis cs:36,0.02658305204317592);
    \draw[red!40, line width=0.35pt]
      (axis cs:37,0.025718820746406804) -- (axis cs:37,0.03236998878555869);
    \draw[red!40, line width=0.35pt]
      (axis cs:38,-0.010008795145713146) -- (axis cs:38,0.006314928857295082);
    \draw[red!40, line width=0.35pt]
      (axis cs:39,0.025445178371847142) -- (axis cs:39,0.05365231006096217);
    \draw[red!40, line width=0.35pt]
      (axis cs:40,0.009965061872557327) -- (axis cs:40,0.011767003823137336);
    \draw[red!40, line width=0.35pt]
      (axis cs:41,0.019526006065759986) -- (axis cs:41,0.02178229578924317);
    \draw[red!40, line width=0.35pt]
      (axis cs:42,-0.015407160935584547) -- (axis cs:42,-0.00174757301972549);
    \draw[red!40, line width=0.35pt]
      (axis cs:43,0.013740695404865014) -- (axis cs:43,0.02163528433338463);
    \draw[red!40, line width=0.35pt]
      (axis cs:44,0.020708101264259742) -- (axis cs:44,0.02219211187694373);
    \draw[red!40, line width=0.35pt]
      (axis cs:45,0.009508472273861125) -- (axis cs:45,0.01403394025754438);
    \draw[red!40, line width=0.35pt]
      (axis cs:46,0.015054805718344331) -- (axis cs:46,0.017238081240096383);
    \draw[red!40, line width=0.35pt]
      (axis cs:47,0.01080194076148638) -- (axis cs:47,0.013066561380737727);
    \draw[red!40, line width=0.35pt]
      (axis cs:48,0.008529914235274116) -- (axis cs:48,0.014319718269609674);
    \draw[red!40, line width=0.35pt]
      (axis cs:49,0.010871508737726708) -- (axis cs:49,0.012336897387111058);
    \draw[red!40, line width=0.35pt]
      (axis cs:50,0.010255241244043733) -- (axis cs:50,0.015215561527854917);
    \draw[red!40, line width=0.35pt]
      (axis cs:51,0.020812058081588782) -- (axis cs:51,0.026965434872261117);
    \draw[red!40, line width=0.35pt]
      (axis cs:52,-0.0018525169606347063) -- (axis cs:52,0.0027719557460592547);
    \draw[red!40, line width=0.35pt]
      (axis cs:53,-0.001314505381125067) -- (axis cs:53,0.006044883831556528);
    \draw[red!40, line width=0.35pt]
      (axis cs:54,0.02155985372218964) -- (axis cs:54,0.025332035321885972);
    \draw[red!40, line width=0.35pt]
      (axis cs:55,0.00424563786375933) -- (axis cs:55,0.009266664496102292);
    \draw[red!40, line width=0.35pt]
      (axis cs:56,0.010454774878457286) -- (axis cs:56,0.011941887863640217);
    \draw[red!40, line width=0.35pt]
      (axis cs:57,0.011548986926529165) -- (axis cs:57,0.01310651656870039);
    \draw[red!40, line width=0.35pt]
      (axis cs:58,0.008913693553409457) -- (axis cs:58,0.010897170445655961);
    \draw[red!40, line width=0.35pt]
      (axis cs:59,0.01158116105784617) -- (axis cs:59,0.013326762171783748);
    \draw[red!40, line width=0.35pt]
      (axis cs:60,0.005525794566766556) -- (axis cs:60,0.00851910021649276);
    \draw[red!40, line width=0.35pt]
      (axis cs:61,0.013131019110289367) -- (axis cs:61,0.014143272940194762);
    \draw[red!40, line width=0.35pt]
      (axis cs:62,0.002902335595304745) -- (axis cs:62,0.005801070734320561);
    \draw[red!40, line width=0.35pt]
      (axis cs:63,0.03013926634541537) -- (axis cs:63,0.03547079334495878);
    \draw[red!40, line width=0.35pt]
      (axis cs:64,-0.00014593341390064423) -- (axis cs:64,0.005092988817140752);
    \draw[red!40, line width=0.35pt]
      (axis cs:65,-0.1576009192444456) -- (axis cs:65,-0.13090474397631482);
    \draw[red!40, line width=0.35pt]
      (axis cs:66,0.0074101329824832396) -- (axis cs:66,0.008469521454832644);
    \draw[red!40, line width=0.35pt]
      (axis cs:67,0.0018970633400648004) -- (axis cs:67,0.004238260233721617);
    \draw[red!40, line width=0.35pt]
      (axis cs:68,-0.013851197393913536) -- (axis cs:68,0.003503069393926619);
    \draw[red!40, line width=0.35pt]
      (axis cs:69,0.006466167430857166) -- (axis cs:69,0.00811763319522122);
    \draw[red!40, line width=0.35pt]
      (axis cs:70,0.011294292004536091) -- (axis cs:70,0.014704140087756282);
    \draw[red!40, line width=0.35pt]
      (axis cs:71,-0.007919409137584807) -- (axis cs:71,0.004124779401258941);
    \draw[red!40, line width=0.35pt]
      (axis cs:72,-0.001431814468997151) -- (axis cs:72,0.003093882515674746);
    \draw[red!40, line width=0.35pt]
      (axis cs:73,-0.0064381806458256715) -- (axis cs:73,-0.0030301651710860312);
    \draw[red!40, line width=0.35pt]
      (axis cs:74,0.010139140442851602) -- (axis cs:74,0.011577033572028573);
    \draw[red!40, line width=0.35pt]
      (axis cs:75,-0.012002220321099205) -- (axis cs:75,-0.0025814109207322935);
    \draw[red!40, line width=0.35pt]
      (axis cs:76,-0.0011533202049946996) -- (axis cs:76,0.002041055256788658);
    \draw[red!40, line width=0.35pt]
      (axis cs:77,0.00328565694814739) -- (axis cs:77,0.00418823530440617);
    \draw[red!40, line width=0.35pt]
      (axis cs:78,-0.0066745736187196255) -- (axis cs:78,-0.002295621375721479);
    \draw[red!40, line width=0.35pt]
      (axis cs:79,-0.002830805331274054) -- (axis cs:79,-0.0008387900014583734);
    \draw[red!40, line width=0.35pt]
      (axis cs:80,-0.0002700613256302925) -- (axis cs:80,0.0016102308042354723);
    \draw[red!40, line width=0.35pt]
      (axis cs:81,0.0027592412866346483) -- (axis cs:81,0.0056842839196580485);
    \draw[red!40, line width=0.35pt]
      (axis cs:82,0.00487761583822726) -- (axis cs:82,0.013162997451555327);
    \draw[red!40, line width=0.35pt]
      (axis cs:83,0.004035782179422513) -- (axis cs:83,0.004734186262434995);
    \draw[red!40, line width=0.35pt]
      (axis cs:84,-0.10938396619458635) -- (axis cs:84,-0.07073309553652954);
    \draw[red!40, line width=0.35pt]
      (axis cs:85,0.0026400399873039654) -- (axis cs:85,0.0051384603572774525);
    \draw[red!40, line width=0.35pt]
      (axis cs:86,-0.0014030045266859158) -- (axis cs:86,0.0008180497018136299);
    \draw[red!40, line width=0.35pt]
      (axis cs:87,0.0016928982045191167) -- (axis cs:87,0.002763665483124304);
    \draw[red!40, line width=0.35pt]
      (axis cs:88,0.0013990924014796752) -- (axis cs:88,0.0029904244417914743);
    \draw[red!40, line width=0.35pt]
      (axis cs:89,0.0031598001907813676) -- (axis cs:89,0.004005542219839311);
    \draw[red!40, line width=0.35pt]
      (axis cs:90,0.002987501623183448) -- (axis cs:90,0.0037501500221137954);
    \draw[red!40, line width=0.35pt]
      (axis cs:91,-0.08763787418058058) -- (axis cs:91,-0.07187855687207992);
    \draw[red!40, line width=0.35pt]
      (axis cs:92,-0.0008078416051499644) -- (axis cs:92,0.0008421159688132093);
    \draw[red!40, line width=0.35pt]
      (axis cs:93,0.0019931294459620925) -- (axis cs:93,0.0030466106108308095);
    \draw[red!40, line width=0.35pt]
      (axis cs:94,-0.0020113630246666714) -- (axis cs:94,-0.0002557405241425816);
    \draw[red!40, line width=0.35pt]
      (axis cs:95,-0.026579061231104664) -- (axis cs:95,-0.005684173784016153);
    \draw[red!40, line width=0.35pt]
      (axis cs:96,-0.00010131439297784848) -- (axis cs:96,0.0007281592452850604);
    \draw[red!40, line width=0.35pt]
      (axis cs:97,4.727940722482537e-05) -- (axis cs:97,0.0010540591229428972);
    \draw[red!40, line width=0.35pt]
      (axis cs:98,-0.0020896721598512033) -- (axis cs:98,0.0009128073455929744);
    \draw[red!40, line width=0.35pt]
      (axis cs:99,0.002810767770743148) -- (axis cs:99,0.0036064945004807745);
    \draw[red!40, line width=0.35pt]
      (axis cs:100,0.0019373624088951152) -- (axis cs:100,0.0028998970480475037);
    \draw[red!40, line width=0.35pt]
      (axis cs:101,0.0025049817523671043) -- (axis cs:101,0.0033017632914439992);
    \draw[red!40, line width=0.35pt]
      (axis cs:102,0.0011782499047556317) -- (axis cs:102,0.001698758245835511);
    \draw[red!40, line width=0.35pt]
      (axis cs:103,0.0026528352990479575) -- (axis cs:103,0.0034043746612804878);
    \draw[red!40, line width=0.35pt]
      (axis cs:104,8.6045268684875e-05) -- (axis cs:104,0.0012652156930085252);
    \draw[red!40, line width=0.35pt]
      (axis cs:105,0.00017792458775944743) -- (axis cs:105,0.0010972194281414086);
    \draw[red!40, line width=0.35pt]
      (axis cs:106,0.0015226679191395276) -- (axis cs:106,0.0021341911355698493);
    \draw[red!40, line width=0.35pt]
      (axis cs:107,0.002264532736961205) -- (axis cs:107,0.003604832590639031);
    \draw[red!40, line width=0.35pt]
      (axis cs:108,0.0006158867562937898) -- (axis cs:108,0.0016086549199286738);
    \draw[red!40, line width=0.35pt]
      (axis cs:109,-0.014334735498218853) -- (axis cs:109,-0.009977582593337735);
    \draw[red!40, line width=0.35pt]
      (axis cs:110,-0.039269905007911256) -- (axis cs:110,-0.028432368318546);
    \draw[red!40, line width=0.35pt]
      (axis cs:111,-0.005739253207027246) -- (axis cs:111,0.0058471330173603675);
    \draw[red!40, line width=0.35pt]
      (axis cs:112,0.0009636011418179067) -- (axis cs:112,0.0013078765371432632);
    \draw[red!40, line width=0.35pt]
      (axis cs:113,-0.002839781565963876) -- (axis cs:113,-0.0012742586477548414);
    \draw[red!40, line width=0.35pt]
      (axis cs:114,-0.0034832994973056195) -- (axis cs:114,-0.0011611086807663041);
    \draw[red!40, line width=0.35pt]
      (axis cs:115,-0.008664701819716068) -- (axis cs:115,0.007813012231567594);
    \draw[red!40, line width=0.35pt]
      (axis cs:116,0.000792401417290116) -- (axis cs:116,0.001278023568010901);
    \draw[red!40, line width=0.35pt]
      (axis cs:117,-0.004579851170080039) -- (axis cs:117,-0.00018193162614835862);
    \draw[red!40, line width=0.35pt]
      (axis cs:118,0.00014386019801163215) -- (axis cs:118,0.0019671575823686513);
    \draw[red!40, line width=0.35pt]
      (axis cs:119,0.0007309188573451919) -- (axis cs:119,0.0013173538134722217);
    \draw[red!40, line width=0.35pt]
      (axis cs:120,-0.006146244117890748) -- (axis cs:120,-0.003995835061503122);
    \draw[red!40, line width=0.35pt]
      (axis cs:121,0.0008659664851059659) -- (axis cs:121,0.0016105550373898586);
    \draw[red!40, line width=0.35pt]
      (axis cs:122,0.00038844761150058724) -- (axis cs:122,0.0008412505427346456);
    \draw[red!40, line width=0.35pt]
      (axis cs:123,0.00048269840678248215) -- (axis cs:123,0.0012188868561289047);
    \draw[red!40, line width=0.35pt]
      (axis cs:124,0.0013296490573984376) -- (axis cs:124,0.0021641259391672195);
    \draw[red!40, line width=0.35pt]
      (axis cs:125,0.002368155973830495) -- (axis cs:125,0.004358226086830424);
    \draw[red!40, line width=0.35pt]
      (axis cs:126,0.0013095672133968336) -- (axis cs:126,0.0020524756585635854);
    \draw[red!40, line width=0.35pt]
      (axis cs:127,0.0008865029888811805) -- (axis cs:127,0.0016470672686117842);
    \draw[red!40, line width=0.35pt]
      (axis cs:128,0.011743566873831554) -- (axis cs:128,0.014444033774682527);
    \draw[red!40, line width=0.35pt]
      (axis cs:129,-0.00024081449088038926) -- (axis cs:129,0.0006144901954718484);
    \draw[red!40, line width=0.35pt]
      (axis cs:130,0.000503459818563527) -- (axis cs:130,0.0008841961210384014);
    \draw[red!40, line width=0.35pt]
      (axis cs:131,-0.00022962392083221488) -- (axis cs:131,0.0008937159735447507);
    \draw[red!40, line width=0.35pt]
      (axis cs:132,-0.0025770148192042442) -- (axis cs:132,-0.0016555741927333216);
    \draw[red!40, line width=0.35pt]
      (axis cs:133,-0.00011770911197505273) -- (axis cs:133,0.00025119498827721397);
    \draw[red!40, line width=0.35pt]
      (axis cs:134,0.0001617715512616302) -- (axis cs:134,0.0006861073077685132);
    \draw[red!40, line width=0.35pt]
      (axis cs:135,0.0004544348220313977) -- (axis cs:135,0.0008345086431518255);
    \draw[red!40, line width=0.35pt]
      (axis cs:136,0.0003413568667862051) -- (axis cs:136,0.00060655443057242);
    \draw[red!40, line width=0.35pt]
      (axis cs:137,-0.00040097674581607485) -- (axis cs:137,7.058889976828051e-05);
    \draw[red!40, line width=0.35pt]
      (axis cs:138,0.00041221261867112644) -- (axis cs:138,0.0009630471718785579);
    \draw[red!40, line width=0.35pt]
      (axis cs:139,0.0013270387231938351) -- (axis cs:139,0.003461158169633201);
    \draw[red!40, line width=0.35pt]
      (axis cs:140,-0.012920309087236775) -- (axis cs:140,-0.0031680049447742886);
    \draw[red!40, line width=0.35pt]
      (axis cs:141,-0.00550760586732271) -- (axis cs:141,0.00034641730185564195);
    \draw[red!40, line width=0.35pt]
      (axis cs:142,-0.0006389493426651492) -- (axis cs:142,0.00016975677681797013);
    \draw[red!40, line width=0.35pt]
      (axis cs:143,0.00033824220244098053) -- (axis cs:143,0.0006630371207840863);
    \draw[red!40, line width=0.35pt]
      (axis cs:144,0.0028850608965265688) -- (axis cs:144,0.005448458248298933);
    \draw[red!40, line width=0.35pt]
      (axis cs:145,0.00037680230856933233) -- (axis cs:145,0.0008972358101778765);
    \draw[red!40, line width=0.35pt]
      (axis cs:146,-0.00019859880149752135) -- (axis cs:146,0.0002094593554192618);
    \draw[red!40, line width=0.35pt]
      (axis cs:147,0.0003315591528370583) -- (axis cs:147,0.0004624686243305261);
    \draw[red!40, line width=0.35pt]
      (axis cs:148,-0.0020762019553602503) -- (axis cs:148,0.00021839502498158107);
    \draw[red!40, line width=0.35pt]
      (axis cs:149,-0.00029748389310624236) -- (axis cs:149,0.00017285240923138273);
    \draw[red!40, line width=0.35pt]
      (axis cs:150,-0.0006300921175798911) -- (axis cs:150,-3.500080021060848e-05);
    \draw[red!40, line width=0.35pt]
      (axis cs:151,-0.001063750694915751) -- (axis cs:151,0.004566456853675245);
    \draw[red!40, line width=0.35pt]
      (axis cs:152,-0.0006726629337689873) -- (axis cs:152,-3.168009340316218e-05);
    \draw[red!40, line width=0.35pt]
      (axis cs:153,0.0002150228651572803) -- (axis cs:153,0.0005769393898129191);
    \draw[red!40, line width=0.35pt]
      (axis cs:154,-0.00018659496135108836) -- (axis cs:154,0.00016420419765872352);
    \draw[red!40, line width=0.35pt]
      (axis cs:155,1.1824300878598743e-05) -- (axis cs:155,0.00035098118278834045);
    \draw[red!40, line width=0.35pt]
      (axis cs:156,0.0007835057628412392) -- (axis cs:156,0.0009494470608332604);
    \draw[red!40, line width=0.35pt]
      (axis cs:157,-0.00011569654015022205) -- (axis cs:157,0.00019741776732258755);
    \draw[red!40, line width=0.35pt]
      (axis cs:158,-0.0053790989657320765) -- (axis cs:158,0.013763667567256128);
    \draw[red!40, line width=0.35pt]
      (axis cs:159,-0.0008694650497232076) -- (axis cs:159,0.000919571693156815);
    \draw[red!40, line width=0.35pt]
      (axis cs:160,0.00047963707657049126) -- (axis cs:160,0.0009093539632533257);
    \draw[red!40, line width=0.35pt]
      (axis cs:161,0.00015933108292628294) -- (axis cs:161,0.00052573028471307);
    \draw[red!40, line width=0.35pt]
      (axis cs:162,-0.0003563562403804054) -- (axis cs:162,-9.318115282234735e-06);
    \draw[red!40, line width=0.35pt]
      (axis cs:163,0.0012146908558328012) -- (axis cs:163,0.0019942759687670668);
    \draw[red!40, line width=0.35pt]
      (axis cs:164,-6.660601736878282e-06) -- (axis cs:164,0.00038290050786357095);
    \draw[red!40, line width=0.35pt]
      (axis cs:165,-0.000683213989192253) -- (axis cs:165,-0.00020108015130271205);
    \draw[red!40, line width=0.35pt]
      (axis cs:166,-0.00013762620335622787) -- (axis cs:166,0.0019969485282403054);
    \draw[red!40, line width=0.35pt]
      (axis cs:167,-0.003214218206606039) -- (axis cs:167,-0.002093853177910363);
    \draw[red!40, line width=0.35pt]
      (axis cs:168,-0.00044097873745629283) -- (axis cs:168,-0.0001046634715594989);
    \draw[red!40, line width=0.35pt]
      (axis cs:169,-8.734476769841682e-05) -- (axis cs:169,0.00034074532533186127);
    \draw[red!40, line width=0.35pt]
      (axis cs:170,-0.0011528951963268586) -- (axis cs:170,0.0004979480270240894);
    \draw[red!40, line width=0.35pt]
      (axis cs:171,-0.0025750435243214443) -- (axis cs:171,-0.0013817930848134831);
    \draw[red!40, line width=0.35pt]
      (axis cs:172,-2.6809526530871922e-05) -- (axis cs:172,6.599197349797285e-05);
    \draw[red!40, line width=0.35pt]
      (axis cs:173,-7.31437564231687e-05) -- (axis cs:173,-1.2987244866089281e-05);
    \draw[red!40, line width=0.35pt]
      (axis cs:174,-0.0003252534045505535) -- (axis cs:174,-3.364509373666595e-06);
    \draw[red!40, line width=0.35pt]
      (axis cs:175,-9.05312940989099e-05) -- (axis cs:175,-4.785248781725731e-05);
    \draw[red!40, line width=0.35pt]
      (axis cs:176,-6.020602559125546e-05) -- (axis cs:176,0.00013399062154559587);
    \draw[red!40, line width=0.35pt]
      (axis cs:177,0.0) -- (axis cs:177,0.0);
    \draw[red!40, line width=0.35pt]
      (axis cs:178,0.0) -- (axis cs:178,0.0);
    \draw[red!40, line width=0.35pt]
      (axis cs:179,0.0) -- (axis cs:179,0.0);
    \draw[red!40, line width=0.35pt]
      (axis cs:180,-0.0007087227471396017) -- (axis cs:180,0.0018317569610125217);
    \draw[red!40, line width=0.35pt]
      (axis cs:181,0.0) -- (axis cs:181,0.0);
    \draw[red!40, line width=0.35pt]
      (axis cs:182,0.0) -- (axis cs:182,0.0);
    \draw[red!40, line width=0.35pt]
      (axis cs:183,0.0) -- (axis cs:183,0.0);
    \draw[red!40, line width=0.35pt]
      (axis cs:184,0.0) -- (axis cs:184,0.0);
    \draw[red!40, line width=0.35pt]
      (axis cs:185,0.0) -- (axis cs:185,0.0);
    \draw[red!40, line width=0.35pt]
      (axis cs:186,0.0) -- (axis cs:186,0.0);
    \draw[red!40, line width=0.35pt]
      (axis cs:187,0.0) -- (axis cs:187,0.0);
    \draw[red!40, line width=0.35pt]
      (axis cs:188,0.0) -- (axis cs:188,0.0);
    \draw[red!40, line width=0.35pt]
      (axis cs:189,-0.00857784236106851) -- (axis cs:189,0.003318842036394677);
  }
  \begin{tikzpicture}
    \begin{axis}[
        width=0.96\textwidth,
        height=0.32\textwidth,
        xmin=1, xmax=189,
        ymin=-0.16, ymax=0.78,
        xtick={1,32,64,96,128,160,189},
        y coord trafo/.code={\pgfmathparse{ifthenelse(abs(#1)<=0.002,
          #1/0.002,
          sign(#1)*(1+ln(max(abs(#1),0.002)/0.002)/ln(10)))}},
        y coord inv trafo/.code={\pgfmathparse{ifthenelse(abs(#1)<=1,
          0.002*#1,
          sign(#1)*0.002*exp((abs(#1)-1)*ln(10)))}},
        ytick={-0.1,-0.01,-0.002,0,0.002,0.01,0.1,0.5},
        yticklabels={$-0.1$,$-0.01$,$-0.002$,0,0.002,0.01,0.1,0.5},
        xlabel={Feature-group rank by Individual-Feature Permutation $\Delta$AUC},
        ylabel={Relative $\Delta$AUC (\%; symlog)},
        tick label style={font=\scriptsize},
        label style={font=\small},
        axis line style={draw=black!55},
        tick style={draw=black!55},
        grid=major,
        grid style={draw=black!8},
        legend columns=3,
        legend style={
          at={(0.985,0.97)},
          anchor=north east,
          font=\scriptsize,
          draw=black!25,
          fill=white,
          inner sep=2pt,
          column sep=5pt,
        },
      ]
      \shapleycibars
      \addplot[black!45, dashed, line width=0.5pt, forget plot]
        coordinates {(1,0) (189,0)};
      \addplot[
        color=black!45,
        densely dashed,
        line width=0.75pt,
        mark=none,
      ] table[x=rank,y=individual] {\individualshapleydata};
      \addlegendentry{Individual-Feature Permutation}
      \addplot[
        color=red!70!black,
        solid,
        line width=1.1pt,
        mark=none,
      ] table[x=rank,y=shapley] {\individualshapleydata};
      \addlegendentry{20-order mean marginal}
      \addlegendimage{
        only marks,
        mark=|,
        mark size=3pt,
        color=red!40,
        line width=0.35pt,
      }
      \addlegendentry{95\% CI}
    \end{axis}
  \end{tikzpicture}
  \caption{Closeup AUC effects for 189 feature groups, ranked by
    Individual-Feature Permutation relative $\Delta$AUC. The dashed curve shows
    individual permutation; the solid curve and bars show the mean and 95\%
    confidence interval over 20 randomized addition orders, computed as
    $t_{9}$ intervals over the ten reverse-pair means. The vertical axis
    uses a symmetric-logarithmic scale.}
  \label{fig:individual-feature-shapley}
  \Description{A line plot with feature-group rank from 1 to 189 on the
    horizontal axis and relative Closeup AUC change on a symmetric-logarithmic
    vertical axis. A muted dashed line shows Individual-Feature Permutation
    effects. A prominent solid red line
    shows the corresponding order-averaged marginal effects, accompanied
    by light vertical 95 percent confidence-interval bars, and crosses below
    zero for some feature groups.}
\end{figure*}

\subsection{Grouped perturbation validation}
\label{subsec:grouped-evaluation}

We use 1.5M matched evaluation examples and measure AUC for the Closeup head across 189 feature groups that passed the non-negative Individual-Feature Permutation gate. The individual, fixed-order grouped, and order-averaged estimates use the same examples.

We then jointly perturb nested candidate sets before retraining. For set $G_k$, let
$D_k=100(\mathrm{AUC}(G_k)-\mathrm{AUC}_0)/\mathrm{AUC}_0$ and
$m_k=D_k-D_{k-1}$, with $D_0=0$. Because higher AUC is better, a positive
$m_k$ indicates less model reliance, whereas a negative $m_k$ reveals greater
joint reliance. We order groups from least to most important by their
Individual-Feature Permutation score. Pooled inference keeps the marginal cost
of each additional group low.

We find 68 of 176 nonzero fixed-order marginals ($38.6\%$) were negative.
These negative marginals show why the joint gate is necessary: feature groups
that pass individually can reveal model reliance when evaluated in a shared
perturbation context.
Without making use of grouped perturbation, we might have expected
the effect on AUC of removing these features to be the sum of the
Individual-Feature Permutation deltas: $6.554\%$.
However, not only is the grouped effect substantially smaller: $3.084\%$,
the interactions between features are sufficiently negative that the
peak cumulative delta of $3.899\%$ occurs when only 64 of the candidates
which pass the individual gate have been removed.

\stagelead{Order-averaged marginal contributions.} The fixed cumulative path
measures one addition order. To evaluate sensitivity to coalition context, we
average each group's marginal contribution across 20 reverse-paired randomized
orders, yielding an order-averaged marginal contribution for each of the same
189 groups. This is the permutation-sampling estimator \cite{castro2009} of the
Shapley value \cite{shapley1953}; we report it as an order-sensitivity
diagnostic rather than an attribution method, since 20 orders cannot resolve
individual Shapley values across 189 groups.
Reverse pairing lowers weighted attribution variance by $83$--$90\%$ relative
to independent orders. Because this requires 20 cumulative passes rather than
the funnel's one, it is not a production stage.
Figure~\ref{fig:individual-feature-shapley} shows the resulting per-group
estimates and uncertainty.

The group total is well determined while its per-group decomposition is not. The order-averaged attributions sum to $3.079\%$, against the fixed path's $3.084\%$ endpoint, with a standard error of $0.008\%$ ($0.26\%$ of the total). Within a single order the marginals telescope to the full-coalition effect exactly, so the two totals estimate the same quantity and differ only because each path re-measures it under independent permutation draws; the reported standard error is that measurement noise, not variance across orders. Attribution error cancels in the sum, so the composed set is estimated an order of magnitude more precisely than a typical member.

Two structural effects explain why per-group shares resist identification. The
model reads several features in more than one slot, and the order-averaged estimates assign
opposing signs to the two copies: one embedding group scores $+0.080\%$ in one
slot and $-0.050\%$ in another, and an impression-count group $+0.049\%$
against $-0.002\%$. Since both copies are permuted together, no split of their
shared signal is preferred, and only the joint effect is well defined. Reliance
also differs by head, sometimes in direction. For the same 189 groups the
measured full-coalition effect is removable on Closeup AUC and PR-AUC and on
Clickthrough AUC, but load-bearing on Clickthrough PR-AUC: the four attribution
sums are $+3.079\%$, $+2.348\%$, $+1.996\%$, and $-3.769\%$, respectively.
Individual groups swing much further: one
content-type group attributes $-0.042\%$ on Closeup AUC, where the model relies
on it, and $+8.385\%$ on Clickthrough AUC, where it is removable. A per-feature
screen on one head cannot see either effect.

Individual-Feature Permutation deltas also overcount additive value. Closeup retains only
$47$--$48\%$ of their sum once the full coalition is
measured: $3.079\%$ versus $6.554\%$ for AUC and $2.348\%$ versus $4.909\%$ for
PR-AUC. In addition, Clickthrough AUC reverses sign outright, from $-10.698\%$
summed to $+1.996\%$ measured. Value saturates with coalition size as well. The
mean Closeup AUC marginal falls from $0.0327\%$ in positions 1--38 to $0.0030\%$
in positions 152--189, a difference of $-0.0297\%$ with $95\%$ CI
$[-0.0358, -0.0236]$. Reverse pairing matches the two bins on
Individual-Feature Permutation score,
so this decline reflects position in the order and not feature strength. It
corroborates the fixed cumulative path's peak and subsequent decline.

As Figure~\ref{fig:individual-feature-shapley} shows, order averaging broadly
preserves the top of the Individual-Feature Permutation ranking but not the zero
threshold, with disagreements concentrated in the tail. Across individual
orders, fewer than one-third of candidates retain a positive marginal in every
order, so agreement in the mean does not imply stable per-feature effects.

These results support FORTE's staged design: per-feature estimates screen and
rank candidates, while grouped perturbation measures the composed candidate set
before retraining. Estimates from either a single permutation or 20 coalition
orders supported relative ranking but were not sufficiently stable to serve as
removal criteria: across the 176 candidates with a positive
Individual-Feature Permutation score, $2{,}496$ of $3{,}520$ order-specific
marginals ($70.9\%$) retained that direction, an average of $14.2$ of 20 orders
per group (median $16$), and only $53$ candidates retained it in all 20. The
composed-set effect was estimated more precisely, and grouped perturbation
measures it directly in one pooled pass. Order averaging
characterizes order sensitivity but does not replace grouped validation; a 10-order run reproduced the
20-order ranking closely ($0.975$ rank correlation and $95\%$ top-20 overlap).

\section{Industrial Applications}
\label{sec:applications}

FORTE is deployed across Pinterest's production model fleet. We examine two in
depth: a ranking model and a Search Relevance model. Results are reported in each
model's own section. The two models differ in architecture affecting the impact of the deprecations. The Search Relevance model deprecated a larger share of its features yet realized a small parameter reduction because its cross-encoder backbone holds orders of magnitude more parameters.

\subsection{Search Relevance}
\label{subsec:searchrel}

The Search Relevance application uses \textit{segmented evaluation}: a
post-inference filter applied at the metric layer that partitions the evaluation
set into subpopulations and computes importance scores per segment. Because
segmentation is a metric-layer filter, segmented importance reuses the same
inference pipeline, perturbation logic, and aggregation framework as global
importance. Search Relevance serves multilingual traffic, so the segments of
interest are US and non-US markets, and we use segmented A/B results to confirm
that deprecated features did not carry localized signal for non-English
queries.

The Search Relevance model \cite{searchrelevance2024} predicts the relevance of
a candidate Pin to a user's search query on a five-point ordinal scale
(L1--L5); the score is consumed alongside engagement predictions to determine
the final ranking on Pinterest Search.

The production system is a real-time servable student network distilled from a
cross-encoder LLM teacher fine-tuned from a multilingual LLM backbone, ingesting
a heterogeneous feature set spanning query-level signals, Pin-level signals
(text, graph, and visual embeddings), and query--Pin interaction features; that
work describes the full architecture, feature catalog, and distillation
pipeline. The checkpoint that FORTE evaluates extends it with a smaller
cross-encoder layered over the feed-forward student, strengthening query--Pin
interaction modeling within the real-time serving budget. The model serves
multilingual traffic globally in real time across more than 45 languages.

\stagelead{Evaluation setup.} The combination of heterogeneous tensor types,
breadth of input signals, and tight serving budget makes Search Relevance a
natural target for FORTE: many features were added incrementally over multiple
model iterations, and per-feature serving cost compounds at search QPS (queries
per second). We ran FORTE on the production Search Relevance checkpoint, which
ingests 111 features, computing both permutation and ablation importance. We
transformed the
five-point relevance output into a single scalar evaluation value and computed
the signed normalized importance score from its perturbation-induced change.
The candidate gate used an importance-score threshold of $0.01$, after which
candidates were ranked jointly with a cost analysis. Cut points of 32, 48, and
59 features were chosen to span likelihood of passing A/B. The candidate filter
(low mean importance, non-positive fitted slope, and rank stability near the
bottom of the feature list) was applied across every head and lookback window.
Segments were defined as
US and non-US traffic, with segmented results aggregated separately to confirm
neutrality across the multilingual scope.

\begin{table}[t]
  \centering
  \small
  \caption{Search Relevance: offline results and deployment outcomes for the three nested
  candidate sets, one treatment arm each. All three cleared every offline gate;
  only the online guardrails separated them. Efficiency gains are reported for the
  launched 48-feature configuration.}
  \label{tab:searchrel-results}
  \setlength{\tabcolsep}{2pt}
  \begin{tabular*}{\columnwidth}{@{\extracolsep{\fill}}rcc l cc@{}}
    \toprule
    \shortstack{\textbf{Features}\\\textbf{removed}}
      & \shortstack{\textbf{Teacher}\\\textbf{acc.\ $\Delta$}}
      & \shortstack{\textbf{Human}\\\textbf{acc.\ $\Delta$}}
      & \textbf{Outcome}
      & \shortstack{\textbf{Trainable}\\\textbf{params.\ $\Delta$}}
      & \shortstack{\textbf{Serving}\\\textbf{throughput $\Delta$}} \\
    \midrule
    32 & $-0.0010$ & $-0.0012$ & Superseded
      & \textemdash & \textemdash \\
    48 & $-0.0009$ & $+0.0042$ & \emph{Launched}$^*$
      & $-2.3\%$ & $+2.0\%$ \\
    59 & $-0.0011$ & $-0.0012$ & Rejected
      & \textemdash & \textemdash \\
    \bottomrule
  \end{tabular*}
  \par\vspace{2pt}
  \raggedright $^*$The launched configuration also realized network and storage savings.
\end{table}

\stagelead{Results.} The candidate filter surfaced 59 features for
deprecation, drawn from across feature types and entity types (text-match
features, some Pin and query embeddings, and several sparse features). Grouped
ablation of the candidate sets produced an offline 5-scale accuracy change of
$0.1\%$ relative to the production baseline. We then retrained the student model
with the candidate features removed and evaluated against the production
baseline on the same human-annotated relevance test set used in
that work: 5-scale accuracy fell within the offline-neutral
band for this model, for all three candidate sets.

The retrained models were then allocated traffic as treatment arms and A/B tested
against production using relevance measurements obtained on queries from search
traffic. Table~\ref{tab:searchrel-results} reports each arm. We observed no
statistically significant regression in search feed relevance for any arm;
this evaluation uses sDCG@25 from an LLM-as-a-judge labeling system
\cite{wang2025llmrelevance}. Segment-level results across US and non-US markets
were also neutral, indicating that the deprecated features did not carry
localized signal for non-English queries despite the model's multilingual scope.

The offline gates did not separate the three candidate sets: teacher-test accuracy
moved by at most $0.0011$ in every case, and teacher-test RMSE improved in every
case ($-0.0003$, $-0.0037$, $-0.0044$). The online stage did. Search processing
time improved significantly for all three arms; for the 48-feature arm, page
wait time decreased by $0.1696\%$ ($p{=}0.0012$). The most aggressive
59-feature arm triggered a regression in a secondary guardrail and was not
launched; the 48-feature set was selected as the safer operating point. Shipping
the smaller model reduced the trainable parameter count by $2.3\%$, increased
serving throughput by $2\%$, and reduced network ingress and egress for our
model serving system, as well as enabling savings on storage and KV feature
store costs.

\subsection{Pin Ranker}
\label{subsec:pinranker}

The Pin Ranker is a multi-head L2 ranking model powering a visual-search--like
surface that serves roughly half of Pinterest traffic. Given a
(\text{user}, \text{query Pin}, \text{candidate Pin}) triplet, the production
checkpoint predicts closeups, clickthroughs, repins, and downloads. It also includes a
revisitation-based retention head \cite{jiang2026save}.

The architecture combines a transformer-based user-sequence encoder, including
lifelong user-history features \cite{xia2025transact}, with a DCNv2
feature-crossing stage and an MMoE module that produces task-specific engagement
scores \cite{ebrahimi2026warmer}. It has hundreds of millions of dense parameters
spanning multi-modal content, numerical, categorical, and user-behavior
features. These components enable the system to model rich dependencies among
user intent, history, content semantics, and candidate relevance.

\stagelead{Evaluation setup.} We ran FORTE on the production Pin Ranker
checkpoint, computing the signed normalized change in PR-AUC under permutation
across all output heads. Following the deprecation pilot, we now run feature
importance daily on the latest production model. This ongoing practice provides a
consistent benchmark for the team's feature utilization and assists in
monitoring feature health and adoption over time. Temporal importance was
combined with a cost analysis; an audit identified two nested feature sets, of
2 and 8 features, that yielded substantial cost savings at weak measured
reliance.

\stagelead{Results.} FORTE identified eight features as deprecation candidates:
each had a non-negative permutation importance score against PR-AUC across
all output heads, indicating no predictive signal the model used. We retrained
the model without these features. Both candidate sets cleared every gate.
Retraining produced an absolute pairwise-accuracy change of $+0.0001$ for
both sets, while the absolute $\Delta$NDCG@3 was $-0.0015$ and $-0.0010$ for the
2- and 8-feature sets, respectively. These results are averaged across output
heads and remained within the offline-neutral band. As treatment arms over a
three-week
online A/B test, neither showed a statistically significant regression in
clickthroughs, closeups, or repins. The 8-feature set is a superset of the 2-feature
set, so with both arms neutral the larger set was launched and the smaller one
superseded.

After launch, FORTE's final lineage-and-deprecation stage
(Figure~\ref{fig:funnel}) queried the model registry for remaining consumers of
the eight retired features. None remained; after owner review, the corresponding
dataset columns were removed and their feature-materialization jobs
decommissioned.

Table~\ref{tab:pinranker-efficiency} reports the offline outcomes and resulting
efficiency gains.
The parameter, throughput, and inference-cost gains follow directly from the
smaller model, and together reduced annual infrastructure cost by hundreds of
thousands of dollars. Completing deprecation across the feature lineage added
dataset-storage and training-loop savings beyond those available from a
single-model launch.

\begin{table}[t]
  \centering
  \footnotesize
  \caption{Pin Ranker: Retraining results for the two nested candidate
  sets and efficiency gains from launching the 8-feature removal. Throughput is
  measured on a CUDA Graph benchmark.}
  \label{tab:pinranker-efficiency}
  \setlength{\tabcolsep}{1pt}
  \begin{tabular*}{\columnwidth}{@{\extracolsep{\fill}}rcc l ccc@{}}
    \toprule
    \shortstack{\textbf{Features}\\\textbf{removed}}
      & \shortstack{\textbf{Pairwise}\\\textbf{Accuracy}\\\textbf{$\Delta^*$}}
      & \shortstack{\textbf{NDCG@3}\\\textbf{$\Delta^*$}}
      & \textbf{Outcome}
      & \shortstack{\textbf{Trainable}\\\textbf{params.\ $\Delta$}}
      & \shortstack{\textbf{Serving}\\\textbf{throughput}\\\textbf{$\Delta$}}
      & \shortstack{\textbf{Inference}\\\textbf{cost $\Delta$}} \\
    \midrule
    2 & $+0.0001$ & $-0.0015$ & Superseded
      & \textemdash & \textemdash & \textemdash \\
    8 & $+0.0001$ & $-0.0010$ & \emph{Launched}
      & $-29.9\%$ & $+19.8$--$25.5\%$ & ${\sim}-10\%$ \\
    \bottomrule
  \end{tabular*}
  \par\vspace{2pt}
  \raggedright $^*$Retraining results are absolute changes averaged across
  output heads.
\end{table}

\stagelead{Diagnostic use.} Beyond its utility in identifying candidates for
feature deprecation, the feature-importance report serves as a critical
diagnostic mechanism for explaining why a deep learning architectural change had
its observed effect. For example, in the residual-connection layer design for
non-engagement features in the ranker model, the objective is to reduce the
model's over-reliance on engagement-biased signals and elevate content and
quality signals in the decision process. FORTE quantifies the over-reliance
directly: in the baseline ranker, the ratio of $\Delta$PR-AUC from ablating
non-historical features versus historical features sits well below $100\%$,
confirming that the model leans disproportionately on historical signals
\cite[Fig.~2(a)]{ebrahimi2026warmer}. The per-feature $\Delta$PR-AUC
distribution under the residual variant shifts more negative for non-historical
features relative to the baseline, indicating that the residual connection
raises the model's reliance on those features as designed
\cite[Fig.~6(a)]{ebrahimi2026warmer}. Reading the full importance ranking turns
FORTE into a tool for implementation validation, not only post-hoc
interpretation.

\section{Discussion and Conclusion}
\label{sec:discussion}

FORTE combines scalable daily feature evaluation with validation and lineage tracking to support production feature deprecation and infrastructure savings.

\subsection{Limitations}

FORTE prioritizes scalable feature-deprecation decisions over full causal
attribution. This choice creates limitations. Marginal permutation does not
achieve conditional PFI \cite{strobl2008}, so correlated features can receive
inflated or deflated importance scores. FORTE also measures first-order
importance for individual features or manually configured groups, which can miss
features that matter only through interactions.

Temporal feature importance is computed against a model held fixed within each
analysis window,
making changes interpretable as data drift. New windows refresh to the current
production checkpoint, so trend comparisons are scoped to a single checkpoint.

The two-stage downsample provides an approximate shuffling, trading exactness
for runtime; we do not claim it is equivalent to a uniform shuffle.

We apply a single random permutation ($R=1$) to each batch by default. $R$ is
exposed as a hyper-parameter, and each additional repeat multiplies the number
of perturbation experiments; batch size and total dataset size also bear on the
resulting variance. Section~\ref{subsec:single-permutation-proof} provides a finite-sample
analysis and empirical stability measurements supporting this operating choice;
the measurements cover two heads of one production model and do not establish
the choice for every model or metric.

We share two examples on Pinterest models; the efficiency findings in
particular depend on model characteristics and should not be read as
architecture-independent.

The production workflow is designed around these limitations. Candidate
features are selected conservatively across all heads, evaluated jointly through
grouped perturbation, removed through retraining, and validated through online A/B
testing before removal. Future work will focus on automatic interaction
discovery, conditional importance estimates for correlated features, and alerts
for feature failures and data-quality changes over temporal importance series.

\section*{Ethical Considerations}

A feature removal that is neutral in aggregate can still degrade quality for
subpopulations the aggregate does not reflect. FORTE mitigates this with
metric-layer segmented evaluation, applied in this work as US versus non-US
traffic. Two coarse geographic segments cannot resolve effects across the more
than 45 languages the Search Relevance model serves, and can conceal
within-segment harm; this is one mitigation with limited resolution, not a
fairness result. Deployments should monitor at the granularity their traffic
warrants, whether per-language, per-geography, or per-traffic-slice, chosen for
the population actually served.

We note a risk the framework itself introduces. Deprecation errors surface as
aggregate metric regressions, which is precisely the measurement that hides
distributional harm; making removal cheap and routine increases the throughput
of a decision whose failure mode is unevenly distributed.

Removing unused features reduces the compute, storage, and serving footprint of
a model. Our measured effects are a reduced parameter count, higher serving
throughput, and lower storage footprint; removal is expected to reduce energy
use as a consequence, but we did not measure power, accelerator-hours, or
emissions and make no quantitative energy claim.

\bibliographystyle{ACM-Reference-Format}
\bibliography{forte}

\end{document}